\documentclass{article}

\PassOptionsToPackage{numbers, compress}{natbib}
\usepackage{graphicx}
\usepackage[main,final]{neurips_2026}
\usepackage[utf8]{inputenc} 
\usepackage[T1]{fontenc}    
\usepackage{hyperref} 
\usepackage{wrapfig}
\usepackage{url}            
\usepackage{booktabs}       
\usepackage{amsfonts}       
\usepackage{nicefrac}       
\usepackage{microtype}      
\usepackage{xcolor}         
\usepackage{amsthm} 
\usepackage{amsmath}   
\usepackage{amsfonts} 
\usepackage{amssymb}   
\usepackage{mathtools}

\newtheorem{theorem}{Theorem}[section]     
\newtheorem{lemma}[theorem]{Lemma}

\theoremstyle{definition}                   
\newtheorem{definition}{Definition}[section]

\theoremstyle{remark}

\usepackage{thmtools}
\usepackage{thm-restate}

\usepackage{enumitem}

\usepackage[font=small]{caption}

\usepackage[normalem]{ulem}
\newcommand\tsout{\bgroup\markoverwith{\textcolor{magenta}{\rule[0.5ex]{2pt}{0.8pt}}}\ULon}

\usepackage[capitalize,noabbrev]{cleveref}
\definecolor{refcolor}{HTML}{000099}
\hypersetup{ %
pdftitle={},
pdfauthor={},
pdfsubject={},
pdfkeywords={},
pdfborder=0 0 0,
pdfpagemode=UseNone,
colorlinks=true,
linkcolor=refcolor,
citecolor=refcolor,
filecolor=refcolor,
urlcolor=refcolor,    
pdfview=FitH}

\usepackage[most]{tcolorbox}

\newtcbtheorem[
    auto counter,
    number within=section
]{boxeddefinition}{Definition}{
    enhanced,
    breakable,
    colback=white,
    colframe=black,
    boxrule=0.5pt,
    arc=0pt,
    fonttitle=\bfseries,
    coltitle=black,
    attach title to upper,
    separator sign={.~},
}{def}

\usepackage{titlesec}
\titlespacing\section{0pt}{0pt plus 0pt minus 1pt}{0pt plus 0pt minus 1pt}
\titlespacing\subsection{0pt}{0pt plus 0pt minus 1pt}{0pt plus 0pt minus 1pt}
\titlespacing\subsubsection{0pt}{0pt plus 0pt minus 1pt}{0pt plus 0pt minus 1pt}

\usepackage{titletoc}

\newcommand\DoToC{%
  \startcontents
  \printcontents{}{1}{\textbf{Table of Contents}\vskip3pt\hrule\vskip5pt}
  \vskip3pt\hrule\vskip5pt
}

\title{ControlJEPA: Principled Trajectory Regularization via Lyapunov Tube Loss}

\author{Thai-Khanh Nguyen$^{\ast}$ \\
Faculty of Information Technology 1 \\
Posts \& Telecommunications Institute of Technology \\
\texttt{khanhnt1@ptit.edu.vn}
\And
Nhi Y. Pham$^{\ast}$ \\
Center for AI Research \\
VinUniversity \\
\texttt{nhi.py2@vinuni.edu.vn}
\AND
Uyen N.B. Vo$^{\ast}$ \\
Faculty of Artificial Intelligence \\
Posts \& Telecommunications Institute of Technology\\
\texttt{uyenvnb@ptit.edu.vn}
\And
Thieu N. Vo$^{\dagger}$ \\
Department of  Computer Science \\
University of Bath \\
\texttt{ntv22@bath.ac.uk}
\AND
Cuong Pham$^{\dagger}$ \\
Faculty of Information Technology 1 \\
Posts \& Telecommunications Institute of Technology\\
\texttt{cuongpv@ptit.edu.vn}
\And
Tan M. Nguyen$^{\dagger}$ \\
Department of Mathematics \\
National University of Singapore \\
\texttt{tanmn@nus.edu.sg}
}

\newcommand\blfootnote[1]{%
  \begingroup
  \renewcommand\thefootnote{}\footnote{#1}%
  \addtocounter{footnote}{-1}%
  \endgroup
}

\begin{document}

\maketitle


\begin{abstract}
Next-token prediction leaves hidden-state trajectories largely unconstrained, while existing geometric regularizers primarily control the direction rather than the magnitude of trajectory deviations. We introduce \emph{ControlJEPA}, a trajectory regularization framework based on a discrete-time Lyapunov contraction condition from practical Input-to-State Stability theory. Its Lyapunov Tube Loss penalizes violations of this condition on a normalized, scale-invariant measure of transversal deviation. When the contraction condition holds at every step, the hidden-state trajectory remains within an explicit tube around the reference secant. In the partial-ISS regime, we bound mean transversal energy, temporal variance, and average tube width using the Lyapunov loss and contraction parameters, with horizon-independent variance bounds under uniformly bounded violations. ControlJEPA requires no architectural modifications. Across six benchmarks and six model families, it consistently improves accuracy and data efficiency over standard fine-tuning and prior trajectory regularizers. The resulting smoother transversal profile also persists empirically during inference, when the regularizer is inactive. The code is publicly available at: \url{https://github.com/t-khanusa/ControlJEPA}.\blfootnote{$^{\ast}$Co-first authors. $^{\dagger}$Co-last authors. Correspondence to: khanhnt1@ptit.edu.vn \& tanmn@nus.edu.sg}
\end{abstract}

\section{Introduction}
A fundamental limitation of next-token prediction (NTP) is that it optimizes predictions at each position independently, without imposing global structure on the resulting hidden-state trajectories~\citep{schmidt2019generalization, huang2026semantic}. Recent trajectory regularization methods seek to address this limitation by penalizing angular deviations between trajectory segments defined by randomly sampled triplets of hidden states, thereby improving data efficiency without modifying the model architecture~\citep{huang2026semantic, wang2026temporal}. However, these angle-based approaches constrain only the direction of displacement, rather than the magnitude; as a result, trajectories may still drift arbitrarily far from the semantic reference path without incurring any penalty.

This limitation reflects a deeper issue: the lack of a principled stability framework for hidden-state dynamics. Existing regularizers provide neither monotonic control over transversal deviation nor explicit bounds on trajectory divergence as a function of hyperparameters. As a result, notions such as tube width and regularization strength remain ill-defined.

{\bf Contribution.} We introduce the \textit{Controlled Joint Embedding Predictive Architecture (ControlJEPA)}, a principled trajectory regularization framework grounded in discrete-time Input-to-State Stability (ISS) theory~\citep{JIANG19992403}. Our central contribution is the \textit{Lyapunov Tube Loss} $\mathcal{L}_{\mathrm{Lyap}}$, which penalizes violations of an ISS contraction condition on a dimensionless measure of transversal deviation and is zero whenever the condition is already satisfied. ControlJEPA augments standard NTP training with $\mathcal{L}_{\mathrm{Lyap}}$:
\begin{equation}
    \mathcal{L}_{\mathrm{ControlJEPA}}(\theta) 
        = \mathcal{L}_{\mathrm{NTP}}(\theta) 
        + \lambda\,\mathcal{L}_{\mathrm{Lyap}}(\theta),
\end{equation}
requiring no architectural modifications. When the contraction condition holds, hidden states remain within a tube around a reference direction with radius $r^*=\sqrt{\tau/(1-\gamma)}||v_{\mathrm{geo}}||_2$ (Definition~\ref{def:lyapunov_tube}). Enforcing it at every step, however, can excessively compress the transversal component and limit representational capacity (Section~\ref{sec:capacity}). We therefore operate in a \emph{partial-ISS} regime, where the constraint is enforced softly and the achieved geometry reflects a balance between NTP and Lyapunov regularization. The effective tube width is mainly controlled by $\lambda$ and $\gamma$, while $\tau$ sets the theoretical ceiling $r^*$. Normalization by the reference chord further makes the loss invariant to global rescaling. Our contributions are three-fold:

\begin{enumerate}[leftmargin=25pt]
    \item We propose a stability-based framework for hidden-state trajectory regularization using discrete-time ISS theory, yielding explicit and interpretable bounds on trajectory deviation.
    \item We introduce the Lyapunov Tube Loss, a practical and architecture-agnostic regularization objective that enforces stability while preserving representational flexibility.
    \item We bound mean transversal energy, temporal variance, and average tube width in the partial-ISS regime, with horizon-independent variance control under uniformly bounded maximal violations. A spectral capacity bound links vanishing transversal deviation to limited logit expressivity under a bounded unembedding norm, motivating soft regularization.
\end{enumerate}

We empirically demonstrate that ControlJEPA consistently outperforms standard fine-tuning and prior trajectory regularizers across six benchmarks, including NL-RX-SYNTH, NL-RX-TURK, GSM8K, Spider, NQ-Open, and HellaSwag, and six model families. On GSM8K and NQ-Open, it improves accuracy over standard fine-tuning by approximately 10 percentage points on each benchmark, and over Semantic Tube Prediction by approximately 6 and 4 percentage points, respectively. ControlJEPA also improves data efficiency, exhibits a stable hyperparameter region, and empirically retains smoother transversal profiles during inference, when the regularizer is inactive.

{\bf Organization.} We structure this paper as follows: In Section~\ref{sec:preliminaries}, we formalize hidden-state trajectories and introduce the discrete-time ISS framework. In Section~\ref{sec:method}, we derive the Lyapunov Tube Loss and the ControlJEPA objective. In Section~\ref{sec:theoretical_analysis}, we establish theoretical guarantees on capacity preservation and transversal profile stability. Sections~\ref{sec:experiments} and ~\ref{sec:analysis} present empirical validation and deeper analysis across six benchmarks and six model families. We discuss related work in Section ~\ref{sec:related_work} and conclude with future directions. Proofs and implementation details are provided in the Appendix.

\section{Preliminaries}
\label{sec:preliminaries}
\subsection{Hidden-State Trajectories in Language Models}

Let $h_t \in \mathbb{R}^D$ denote the hidden-state representation at token position $t$, defined as the final transformer layer output prior to the output projection (unembedding) into vocabulary logits. A transformer trained with next-token prediction (NTP) induces a sequence of hidden states $h_t = f_\theta(x_{\leq t})$ that evolves according to a discrete-time dynamical system:
\begin{equation}
    h_{t+1} = F_\theta(h_t, x_{t+1}), \qquad t \in \{0, \ldots, T-1\},
\end{equation}
where $\theta$ denotes the model parameters and $x_{t+1}$ is the input at step $t+1$ (the ground-truth token under teacher forcing). The initial state $h_0$ is determined by the input embedding. In the terminology of dynamical systems, this defines a discrete-time \emph{map}~\citep{wiggins2003introduction}.

Training is performed via the NTP objective:
\begin{equation}
    \mathcal{L}_{\mathrm{NTP}}(\theta) 
    = -\frac{1}{T} \sum_{t=1}^{T} \log p_\theta(x_t \mid x_{<t}),
\end{equation}
which optimizes predictions independently at each position. As a consequence, it imposes no explicit constraint on the geometry of the resulting hidden-state trajectory $(h_0, \ldots, h_T)$, which is the primary object of interest in this work.

\subsection{Existing Trajectory Regularization}

The \emph{geodesic hypothesis} suggests that well-structured hidden-state trajectories should follow approximately straight paths in representation space, as straighter trajectories enable more predictable dynamics and reduce first-order prediction error~\citep{henaff2019perceptual}. This behavior is observed in both biological and artificial systems, and recent evidence indicates that large language models implicitly learn to straighten representation trajectories, suggesting that linearization is a fundamental inductive bias in sequence modeling~\citep{hosseini2023large}. Recent methods operationalize this idea using angle-based penalties.

{\bf Semantic Tube Prediction (STP).} \citet{huang2026semantic} penalizes angular deviation between displacement vectors constructed from randomly sampled triplets:
\begin{equation}
    \mathcal{L}_{\mathrm{STP}} 
    = \mathbb{E}_{s < r < t} 
    \bigl[1 - \cos(h_t - h_r,\; h_r - h_s)\bigr].
\end{equation}

{\bf Temporal Straightening (TS).} \citet{wang2026temporal} enforces local smoothness by penalizing curvature between consecutive latent velocities:
\begin{equation}
    \mathcal{L}_{\mathrm{TS}} 
    = \mathbb{E}_t 
    \bigl[1 - \cos(z_{t+1} - z_t,\; z_{t+2} - z_{t+1})\bigr].
\end{equation}

While effective in practice, these approaches share key limitations. First, they constrain only the \emph{direction} of motion and do not control the \emph{magnitude} of deviation, allowing trajectories to drift arbitrarily far from a reference path. Second, they operate locally and do not enforce global bounds on trajectory deviation. Finally, they provide no explicit relationship between hyperparameters and geometric quantities such as tube width, limiting interpretability.

\subsection{Discrete-Time Input-to-State Stability}
\label{sec:prelim:maps}
To overcome the aforementioned limitations, we adopt a discrete-time Lyapunov contraction condition inspired by Input-to-State Stability (ISS) theory \citep{JIANG19992403}. Unlike angle-based penalties, this condition controls the magnitude of deviation and yields explicit bounds on it. The discrete-time formulation is natural for transformers and JEPA-style models, which evolve strictly in discrete steps.
 
Consider a system with state $e_t \in \mathbb{R}^D$ and exogenous signal $\mu_t \in \mathbb{R}^m$:
\begin{equation}
    \label{eq:general_system}
    e_{t+1} = g_\theta(e_t, \mu_t).
\end{equation}
Here $\mu_t$ collects every quantity other than $e_t$ that drives the update, such as the preceding hidden states and the next input. Equation~\eqref{eq:general_system} is therefore a representation of the dynamics, not a modelling assumption.
 
\begin{definition}[ISS-type Lyapunov function]
\label{def:iss-lyapunov}
A continuous function $V: \mathbb{R}^D \to \mathbb{R}_{\geq 0}$ is an \emph{ISS-type Lyapunov function} along a trajectory $(e_t)_{t \geq t_s}$ of~\eqref{eq:general_system} if there exist $\gamma \in (0,1)$ and $\tau \geq 0$ such that
\begin{equation}
    \label{eq:iss_working}
    V_{t+1} \leq \gamma V_t + \tau, \qquad V_t = V(e_t), \qquad t \geq t_s.
\end{equation}
\end{definition}
 
Iterating this inequality from the initial step $t_{s}$ yields
\begin{equation}
    \label{eq:iss_iterate}
    V_t \leq \frac{\tau}{1 - \gamma} + \gamma^{t - t_{s}} V_{t_{s}},
    \qquad t \geq t_{s}.
\end{equation}
The first term is a steady-state bound determined by $(\gamma, \tau)$. The second term is a transient that decays geometrically. Consequently, the sublevel set $\{V \leq \tau/(1-\gamma)\}$ is forward invariant, and every trajectory satisfying~\eqref{eq:iss_working} converges to it.
 
This explicit bound is central to our approach. We construct a Lyapunov function that measures transversal deviation. Whenever the contraction condition holds, the bound translates into a quantitative limit on the width of the trajectory tube.

\section{Principled Trajectory Regularization
via Lyapunov Tube Loss}
\label{sec:method}
We develop ControlJEPA in three steps. First, we decompose hidden-state displacement into longitudinal and transversal components (\S\ref{sec:method:intro}). Next, we define a dimensionless Lyapunov function that induces a tube via the ISS bound (\S\ref{sec:method:candidate}). Finally, we derive the Lyapunov Tube Loss by penalizing violations of the contraction condition and integrate it with the NTP objective (\S\ref{sec:method:loss}).
\subsection{Transversal Decomposition}
\label{sec:method:intro}

We instantiate the ISS framework of \S\ref{sec:prelim:maps} for transformer hidden-state trajectories. For each token position $t$ within a semantic segment, we define the \emph{transversal error} $e_t \in \mathbb{R}^D$ as the component of the hidden-state displacement that lies perpendicular to a reference direction connecting two semantic anchor states (formally derived in Eq.~\eqref{eq:decompose}). This error plays the role of the state in system~\eqref{eq:general_system}, and the Lyapunov candidate $V_t$ is constructed so that the decrease condition~\eqref{eq:iss_working} directly bounds the tube width.
\begin{figure}[t]
  \centering
  \includegraphics[width=\columnwidth]{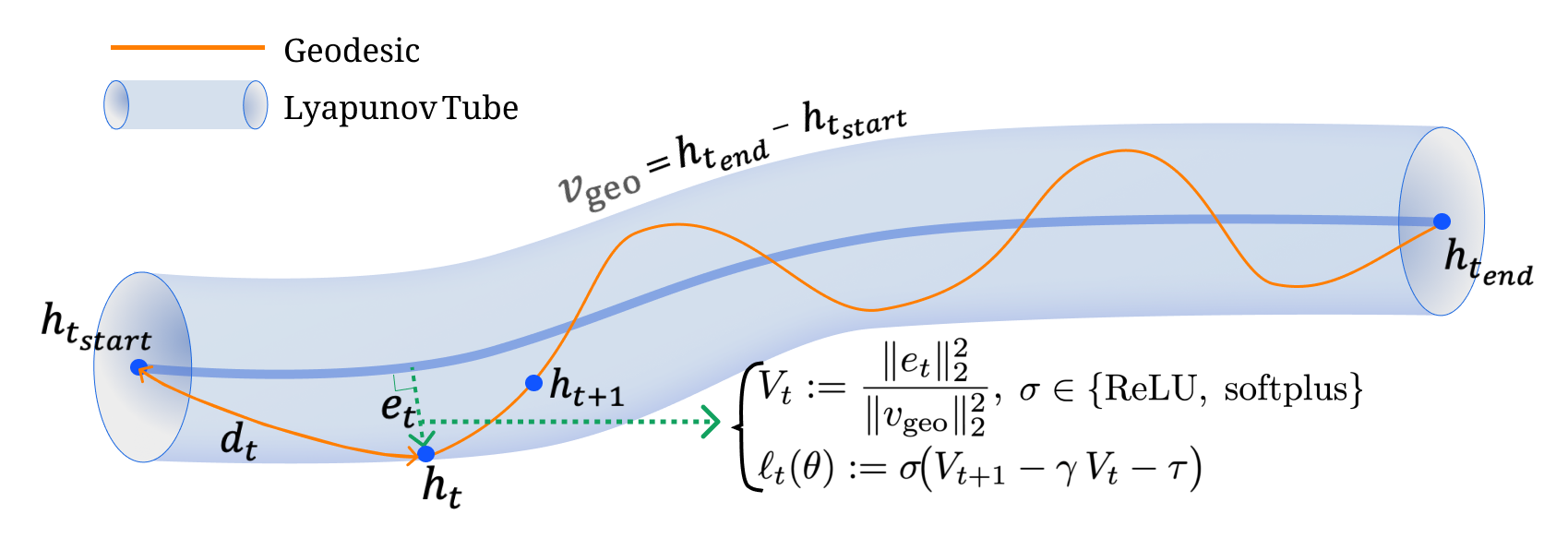}
  \vspace{-1em}
  \caption{\textbf{Lyapunov Tube.} The displacement $d_t = h_t - h_{t_{\text{start}}}$ is decomposed into a longitudinal component along the reference secant $v_{\text{geo}}$ and a transversal error $e_t \perp v_{\text{geo}}$. The Lyapunov Tube Loss penalizes violations of $V_{t+1} \leq \gamma V_t + \tau$, confining hidden-state trajectories within the tube. (Definition~\ref{def:lyapunov_tube}).}
  \label{fig:controlJEPA}
  \vspace{-1em}
\end{figure}
Let $h_t \in \mathbb{R}^D$ denote the hidden state at position $t$ and let $h_{t_{\mathrm{start}}}, h_{t_{\mathrm{end}}} \in \mathbb{R}^D$ be semantic anchor states. The \emph{reference secant} $v_{\mathrm{geo}}$ is the vector connecting these anchors; its unit vector $\hat{v}$ defines a preferred direction along which the trajectory is expected to evolve:
\begin{equation}
  \label{eq:vgeo}
  v_{\mathrm{geo}} = h_{t_{\mathrm{end}}}
    - h_{t_{\mathrm{start}}},
  \quad
  \hat{v} = \frac{v_{\mathrm{geo}}}
    {\|v_{\mathrm{geo}}\|_2},
  \quad
  d_t = h_t - h_{t_{\mathrm{start}}}.
\end{equation}
Here $d_t$ is the displacement of the intermediate hidden state $h_t$ from the start anchor. Projecting $d_t$ onto $\hat{v}$ decomposes it into a \emph{longitudinal} component (the progress along the reference direction) and a \emph{transversal} error $e_t$ (the deviation perpendicular to it):
\begin{equation}
  \label{eq:decompose}
  d_t = \underbrace{\langle d_t,\hat{v}\rangle\,
    \hat{v}}_{\text{longitudinal}}
    + \underbrace{e_t}_{\text{transversal}},
  \qquad
  e_t = (I - \hat{v}\hat{v}^\top) d_t,
  \qquad
  e_t \perp \hat{v}.
\end{equation}
The norm $\|e_t\|_2$ measures how far the trajectory deviates from the reference line at step $t$, and serves as the tube radius. We define the Lyapunov Tube in Definition~\ref{def:lyapunov_tube} below. Figure~\ref{fig:controlJEPA} illustrates this decomposition.

\subsection{Lyapunov Candidate}
\label{sec:method:candidate}

We now convert the transversal decomposition in Section~\ref{sec:method:intro} into a scalar stability certificate. 
The goal is to measure not only whether the trajectory is aligned with the reference direction, but also how far it deviates from the reference secant in magnitude. 
This scalar quantity will serve as the Lyapunov candidate whose ISS contraction yields an explicit tube bound around the semantic reference path.

A Lyapunov candidate is a nonnegative scalar function $V_t = V(e_t)$ that plays the role of an energy for the transversal deviation. We define the \emph{reference line} as the affine line $\{h_{t_{\mathrm{start}}} + \alpha\, v_{\mathrm{geo}} \mid \alpha \in \mathbb{R}\}$ spanned by the two semantic anchors, along which a perfectly straight trajectory would evolve.
In our setting, $V_t$ should satisfy three requirements: it should vanish exactly when the hidden state lies on the reference line, increase with the squared transversal deviation, and remain invariant under global rescaling of the hidden-state representation. To achieve this, we normalize the squared transversal error by the squared length of the reference secant.

\begin{boxeddefinition}{(Lyapunov Candidate).}{Lyapunov_candidate}
\label{def:Vt}
Given the transversal error $e_t$ and reference secant $v_{\mathrm{geo}}$, we define:
\begin{equation}
\label{eq:Vt}
    V_t 
    =
    \frac{\|e_t\|_2^2}{\|v_{\mathrm{geo}}\|_2^2}.
\end{equation}
\end{boxeddefinition}

This quantity is dimensionless and scale-invariant. 
Moreover, $V_t = 0$ if and only if $h_t$ lies on the reference line spanned by $v_{\mathrm{geo}}$. 
Thus, $V_t$ measures the relative transversal deviation of the hidden-state trajectory from the semantic reference path.

The normalization by $\|v_{\mathrm{geo}}\|_2^2$ is essential. 
Directly minimizing the unnormalized error $\|e_t\|_2^2$ would encourage the trajectory to collapse onto the reference line, which can reduce the rank of the hidden-state matrix and destroy representational capacity, as shown in Proposition~4.1. 
By contrast, the normalized candidate allows the model to reduce relative deviation either by controlling the transversal error or by increasing the semantic separation between the anchor states. 
This distinction is important because it permits geometric stabilization without forcing all hidden states into a low-dimensional geodesic subspace.

The ISS condition applied to this Lyapunov candidate takes the form
\begin{equation}
    \label{eq:iss_Vt}
    V_{t+1} \leq \gamma V_t + \tau,
    \qquad \gamma \in (0,1), \quad \tau \geq 0,
\end{equation}
where $\gamma$ controls the contraction rate and $\tau$ specifies the allowable residual transversal deviation. 
Applying the iterated bound~\eqref{eq:iss_iterate} with $t_s = t_{\mathrm{start}}$ gives
\begin{equation}
    V_t \leq \frac{\tau}{1-\gamma} 
    + \gamma^{t - t_{\mathrm{start}}} V_{t_{\mathrm{start}}}.
\end{equation}
In our setting, the start anchor satisfies $d_{t_{\mathrm{start}}} = h_{t_{\mathrm{start}}} - h_{t_{\mathrm{start}}} = 0$, hence $e_{t_{\mathrm{start}}} = 0$ and $V_{t_{\mathrm{start}}} = 0$. The transient term therefore vanishes identically, and the bound reduces to
\begin{equation}
\label{eq:steady_state}
    V_t \leq \frac{\tau}{1-\gamma}, \qquad 
    t \geq t_{\mathrm{start}}.
\end{equation}
This motivates the following geometric notion.

\begin{boxeddefinition}{(Lyapunov Tube).}{lyapunov_tube}
\label{def:lyapunov_tube}

Given contraction parameters $\gamma \in (0,1)$ and $\tau \geq 0$, the Lyapunov tube is the set of hidden states whose Lyapunov value lies within the ISS steady-state bound:
\begin{equation}
\label{eq:tube_def}
    \mathcal{B}_{\mathrm{tube}}
    =
    \left\{
        h_t
        \;\middle|\;
        V_t \leq \frac{\tau}{1-\gamma}
    \right\}.
\end{equation}

Equivalently, in terms of the transversal error, this corresponds to the tube-radius bound:
\begin{equation}
    \|e_t\|_2
    \leq
    r^\ast
    :=
    \sqrt{\frac{\tau}{1-\gamma}}\,
    \|v_{\mathrm{geo}}\|_2 .
\end{equation}
\end{boxeddefinition}

The tube $\mathcal{B}_{\mathrm{tube}}$ serves as the target attractor set of the regularization: when the ISS condition~\eqref{eq:iss_Vt} holds at every step, all hidden states are guaranteed to remain inside $\mathcal{B}_{\mathrm{tube}}$; the Lyapunov Tube Loss (\S~\ref{sec:method:loss}) is designed precisely to penalize violations of this condition. In practice, however, the ISS condition need not hold at every step to achieve better trade-offs between the model's stability and capacity. Section~\ref{sec:theoretical_analysis} analyses this partial-ISS regime and characterizes what our method achieves when the condition holds only approximately.

Therefore, enforcing the ISS contraction condition on $V_t$ provides an explicit and interpretable geometric guarantee: the hidden-state trajectory is confined to a tube around the reference secant whose width is determined by the contraction rate $\gamma$, the slack parameter $\tau$, and the reference scale $\|v_{\mathrm{geo}}\|_2$. 
Unlike angle-based penalties, this construction controls the magnitude of transversal deviation while preserving a nonzero slack region for expressive hidden-state representations.

\subsection{ControlJEPA: Stable Trajectory Regularization}
\label{sec:method:loss}

Violations of $V_{t+1} - \gamma V_t - \tau \leq 0$ are penalized through a non-negative activation $\sigma$:
\begin{equation}
  \label{eq:pointwise}
  \ell_t(\theta)
    = \sigma\!\bigl(
      V_{t+1} - \gamma V_t - \tau\bigr),
  \qquad
  \sigma \in \{\mathrm{ReLU},\;\mathrm{softplus}\},
\end{equation}
so $\ell_t = 0$ whenever the ISS condition is already satisfied ($\sigma = \mathrm{ReLU}$) or near zero ($\sigma = \mathrm{softplus}$). Following \citet{huang2026semantic}, we restrict penalization to valid transitions
$\mathcal{T} \subseteq \{t_{\mathrm{start}}{+}1,
\ldots, t_{\mathrm{end}}{-}1\}$ within contiguous
semantic segments. We define the \emph{Lyapunov Tube Loss} as follows:
\begin{boxeddefinition}{(Lyapunov Tube Loss).}{lyapunov_tube_loss}
  \label{def:tube_loss}
  \begin{equation}
    \label{eq:tube_loss}
    \mathcal{L}_{\mathrm{Lyap}}(\theta)
      = \frac{1}{|\mathcal{T}|}
      \sum_{t \in \mathcal{T}}
        \sigma\!\Biggl(
          \frac{\|e_{t+1}\|_2^2}
               {\|v_{\mathrm{geo}}\|_2^2}
          - \gamma
          \frac{\|e_t\|_2^2}
               {\|v_{\mathrm{geo}}\|_2^2}
          - \tau
        \Biggr),
  \end{equation}
  with $\gamma \in (0,1)$ (contraction rate) and
  $\tau \geq 0$ (orthogonal slack).

\end{boxeddefinition}
\noindent
The \textbf{ControlJEPA} objective augments NTP with the Lyapunov Tube Loss:
\begin{equation}
  \label{eq:controljepa}
  \mathcal{L}_{\mathrm{controlJEPA}}(\theta)
    = \mathcal{L}_{\mathrm{NTP}}(\theta)
    + \lambda\,\mathcal{L}_{\mathrm{Lyap}}(\theta),
\end{equation}
where $\lambda > 0$ controls the regularization strength. The only additions to standard fine-tuning are the projection in Eq.~\eqref{eq:decompose} and the scalar comparisons in Eq.~\eqref{eq:pointwise}.

The geometric effects of $\mathcal{L}_{\mathrm{Lyap}}$ are characterized formally in Section~\ref{sec:theoretical_analysis}: minimizing the loss provably confines the mean transversal level (Theorem~\ref{thm:transversal}, bound~\eqref{eq:mean-bound}) and suppresses its temporal variance (bound~\eqref{eq:var-bound}), yielding representations that are both geometrically regular and informationally rich. Crucially, this operates in a \emph{partial-ISS} regime where $\mathcal{L}_{\mathrm{Lyap}} > 0$, preserving the representational capacity that NTP requires (Section~\ref{sec:partial-iss}).

\section{Theoretical Analysis}
\label{sec:theoretical_analysis}
{\bf Scope of the theoretical analysis.}
The bounds in this section hold for trajectories on the training distribution; no formal guarantee extends to unseen inputs. Whether the induced geometric structure generalizes is an empirical question addressed in Section~\ref{sec:experiments}. Figure~\ref{fig:infer_time} shows the stable transversal profile persists at inference time, suggesting training reshapes the representation manifold globally. A theoretical characterization of this generalization remains an open question.
The Lyapunov Tube Loss is designed to enforce the geodesic hypothesis~\citep{huang2026semantic} without destroying the representational capacity that next-token prediction requires. We establish two preliminary results that jointly motivate why the method must operate in a \emph{partial-ISS regime} (\S\ref{sec:capacity}--\ref{sec:partial-iss}), and then state the central theorem characterizing what minimizing $\mathcal{L}_{\mathrm{Lyap}}$ achieves in that regime (\S\ref{sec:theorem}).
\subsection{Why Minimising \texorpdfstring{$\|e_t\|_2$}{the transversal error} Collapses Model Capacity}
\label{sec:capacity}
To see why the transversal error $\|e_t\|_2$ must be \emph{controlled} rather than minimized, consider the effect on the model's output distribution. Stack the hidden states as rows of $H\in\mathbb{R}^{T\times D}$. Using the orthogonal decomposition $h_t - h_0 = \alpha_t\hat{v}\|v_{\mathrm{geo}}\|_2 + e_t$:
\begin{equation}
  H \;=\; \mathbf{1}h_0^\top + \alpha\bigl(\hat{v}\|v_{\mathrm{geo}}\|_2\bigr)^\top + E,
  \label{eq:H_decomp}
\end{equation}
where $\alpha = (\alpha_1,\ldots,\alpha_T)^\top \in \mathbb{R}^T$ and $E\in\mathbb{R}^{T\times D}$ has rows $e_t$. The logit matrix then is $L=HU^\top$ (where $U$ is the unembedding matrix). We define the rank-two logits:
\begin{equation}
   \Tilde{H} \;:=\;\mathbf{1}h_0^\top + \alpha\bigl(\hat{v}\|v_{\mathrm{geo}}\|_2\bigr)^\top, \quad \Tilde{L} \;:=\; \Tilde{H}U^\top 
   \label{eq:rank-2-logit}
\end{equation}
Then $\operatorname{rank}(\Tilde{L}) \leq 2$, since $\Tilde{H}$ is a sum of two rank-one terms and right multiplication cannot increase rank.

The object is to see how far the next-token distribution $p_\theta(.|x_{<t})_{t=1}^T$ can deviate from a two-parameter family, as a function of the transversal error bound $\delta$. Following that, we come up with proposition \ref{prop:capacity}, which claims that such deviation is bounded by an explicit, continuous, monotone function of $\delta$.

\begin{restatable}[Spectral capacity bound]{proposition}{Capacity}
\label{prop:capacity}
Suppose $\|e_t\|_2 \le \delta$ for all $t$. Then 
\begin{equation}
    \frac{1}{T} \sum_{j > 2} \sigma_j\,(L)^2 \leq \delta^2\,|U|_{op}^2,
    \label{eq:singular-bound}
\end{equation}
and consequently there is a smooth manifold $\mathcal{N}$ of dimension at most $2$ in the probability simplex with
\begin{equation}
     \frac{1}{T} \sum_{t=1}^{T} \operatorname{dist}_{\ell_2}(p_\theta(.|x_{<t}), \mathcal{N})^2 \leq \frac{1}{4}\delta^2\,|U|_{op}^2. 
    \label{eq:prob-dist}
\end{equation}
\end{restatable}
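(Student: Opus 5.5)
The first inequality follows from the Eckart--Young--Mirsky theorem, using the rank-two matrix $\Tilde{L}$ from \eqref{eq:rank-2-logit} as a competitor. Since $\operatorname{rank}(\Tilde{L})\le 2$, the best rank-two approximation error of $L$ in Frobenius norm is $\sum_{j>2}\sigma_j(L)^2$, so
\begin{equation*}
  \sum_{j>2}\sigma_j(L)^2 \;\le\; \|L-\Tilde{L}\|_F^2 \;=\; \|EU^\top\|_F^2 .
\end{equation*}
This identity uses $L-\Tilde{L}=(H-\Tilde{H})U^\top=EU^\top$, which is \eqref{eq:H_decomp}. Next I apply $\|EU^\top\|_F\le \|E\|_F\,|U|_{op}$ (submultiplicativity of Frobenius against operator norm) and bound $\|E\|_F^2=\sum_{t}\|e_t\|_2^2\le T\delta^2$ by the hypothesis. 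Dividing by $T$ gives \eqref{eq:singular-bound}. I will also record the row-wise version of the same bound, since the second claim needs it: writing $\ell_t=Uh_t$ and $\tilde\ell_t=U\tilde h_t$,
\begin{equation*}
  \frac1T\sum_{t=1}^T\|\ell_t-\tilde\ell_t\|_2^2\le\delta^2|U|_{op}^2 .
\end{equation*}

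For the second claim I take $\mathcal{N}$ to be the image of the affine plane $\{Uh_0+a\,U v_{\mathrm{geo}}+b\,c\}$ under softmax. One could also use just the one-parameter line $a\mapsto Uh_0+aUv_{\mathrm{geo}}$, since every $\tilde\ell_t$ already lies on it. Here $c$ is any fixed direction, which I include only so the dimension is at most two. The softmax map is smooth, so this image is a smooth manifold of dimension at most $2$; to make it an honest embedded manifold one may need to quotient out the $\mathbf{1}$ direction, which softmax ignores, or shrink to an immersed image, and I will phrase "manifold" accordingly. By construction $\mathrm{softmax}(\tilde\ell_t)\in\mathcal{N}$ for every $t$, so
\begin{equation*}
  \operatorname{dist}_{\ell_2}\bigl(p_\theta(\cdot|x_{<t}),\mathcal{N}\bigr)\le\|\mathrm{softmax}(\ell_t)-\mathrm{softmax}(\tilde\ell_t)\|_2 .
\end{equation*}

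The key step is the Lipschitz constant of softmax in $\ell_2$. Its Jacobian is $J(z)=\mathrm{diag}(p)-pp^\top$, which is symmetric positive semidefinite. By the Gershgorin theorem its spectral norm is at most $\max_i 2p_i(1-p_i)$, and the sharper known bound is $\|J(z)\|_{op}\le 1/2$. I need constant $1/2$ to produce the factor $\frac14$ after squaring. I would prove it by noting $v^\top J v=\mathrm{Var}_p(v)$ for $\|v\|_2=1$. The variance of a random variable taking values $v_i$ is at most $\frac14(\max_i v_i-\min_i v_i)^2\le\frac14\bigl(|v_i|+|v_j|\bigr)^2\le\frac12$ by Popoviciu's inequality. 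Integrating $J$ along the segment from $\tilde\ell_t$ to $\ell_t$ then gives $\|\mathrm{softmax}(\ell_t)-\mathrm{softmax}(\tilde\ell_t)\|_2\le\frac12\|\ell_t-\tilde\ell_t\|_2$. Squaring, averaging over $t$, and using the row-wise bound above yields \eqref{eq:prob-dist}.

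I expect the main obstacle to be this softmax Lipschitz constant $1/2$. The crude Gershgorin bound only gives a constant near $1/2$ in favourable cases and not uniformly, so the Popoviciu variance argument is the route I would try first. A secondary point of care is the precise sense in which $\mathcal{N}$ is a smooth manifold, given the shift invariance of softmax.
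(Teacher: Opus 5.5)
Your proposal is correct and follows the paper's proof essentially step for step. You use Eckart--Young--Mirsky with $\widetilde L$ as the competitor plus $\|EU^\top\|_F\le\|E\|_F\|U\|_{\mathrm{op}}$, take $\mathcal N$ as the softmax image of an at most two-dimensional family of logits containing every $\widetilde L_t$, and prove the $\tfrac12$-Lipschitz bound for softmax via $v^\top J v=\operatorname{Var}_p(v)$ and Popoviciu; the paper does exactly this.

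One small correction to an aside: Gershgorin already gives $\|J(z)\|_{\mathrm{op}}\le\max_i 2p_i(1-p_i)\le\tfrac12$ uniformly, since $p_i(1-p_i)\le\tfrac14$. So it is neither crude nor non-uniform, though this does not affect your argument.
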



The formal proof is deferred to Appendix~\ref{app:capacity}. The key message is that any regulariser that drives $\|e_t\|_2$ too small destroys representational capacity as a side effect, regardless of how well it aligns the trajectory with the geodesic. One thing worth flagging is that the bound involves $|U|_{op}$, so it decreases with $\delta$ only if the unembedding norm does not grow to compensate. We therefore report measured $|U|_{op}$ along side $\delta$ across $\lambda_{\mathrm{Lyap}}$ sweep in Figure~\ref{fig:U_op}(a), so that the product $\delta^2\,|U|_{op}^2$ governing \eqref{eq:prob-dist} can be read off directly rather than assumed.

\begin{figure}[t]
    \centering
    \includegraphics[width=0.95\columnwidth]{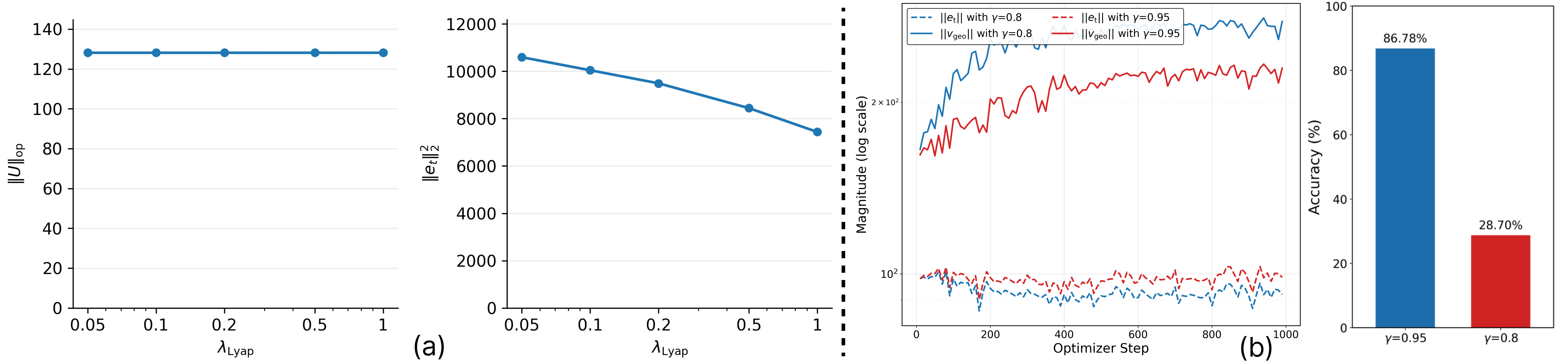}
    \vspace{-0.5em}
    \caption{(a) Unembedding norm and transversal energy across $\lambda_{\mathrm{Lyap}}$. (b) Impact of contraction rate $\gamma$ on geometric components and downstream accuracy.}
    \label{fig:U_op}
    \vspace{-1em}
\end{figure}

\subsection{The Partial-ISS Regime}
\label{sec:partial-iss}

Full ISS satisfaction leads to the same potential destruction. If $\mathcal{L}_{\mathrm{Lyap}}(\theta^*) \approx 0$, the ISS condition holds at every step, and iterating from $V_0 = 0$ gives:
\begin{equation}
  V_t \;\le\; \frac{\tau}{1-\gamma},
  \label{eq:iss_iter}
\end{equation}
so $\|e_t\|_2 \le \sqrt{\tau/(1-\gamma)}\,\|v_{\mathrm{geo}}\|_2$. For small $\tau$ this again forces $\|e_t\|_2$ small, violating Proposition~\ref{prop:capacity}. Hence, minimizing $\mathcal{L}_{\mathrm{Lyap}}$ closely to zero is likely to be  \emph{incompatible} with good NTP performance.

The practical training regime therefore necessarily satisfies $\mathcal{L}_{\mathrm{Lyap}}(\theta) \gg 0$ (Figure~\ref{fig:de_snr_lhp}(b)): the ISS condition is violated at some steps, keeping $\|e_t\|_2$ strictly positive and preserving the representational capacity stored in $E$. We call this the \emph{partial-ISS regime}. Figure~\ref{fig:U_op}(b) confirms empirically that well-performing checkpoints always reside in this regime.

The question then becomes: what does adding Lyapunov Tube Loss actually achieve in the partial-ISS regime? The answer is the stability of the \emph{temporal profile} of $V_t$ rather than suppressing its level, which is the content of the next theorem. Specifically, we measure profile stability by the temporal variance $\sigma^2 := \operatorname{Var}_t[V_t]$; a small $\sigma^2$ means the trajectory maintains a \emph{uniform} distance from the geodesic across all token positions, rather than alternating between near-collapse and large excursions.

\subsection{Transversal Profile Characterization}
\label{sec:theorem}

We first fix notation for the partial-ISS regime. At each step, define the \textbf{signed innovation}
\begin{equation}
  \varepsilon_t \;:=\; V_{t+1} - \gamma V_t - \tau,
  \label{eq:innovation}
\end{equation}
so that $\varepsilon_t \le 0$ when the ISS condition is satisfied and $\varepsilon_t > 0$ when it is violated.
The Lyapunov Tube Loss with the $\operatorname{softplus}$ activation satisfies
\begin{equation}
  \mathcal{L}_{\mathrm{Lyap}}(\theta)\;=\;\frac{1}{|\mathcal{T}|}\sum_{t\in\mathcal{T}}    \log\!\bigl(1+e^{\varepsilon_t}\bigr).
  \label{eq:loss-innovations}
\end{equation}
Let $M:=\max_{t\in\mathcal T}\max(\varepsilon_t,0)$ denote the largest ISS violation along the trajectory, and $n:=\frac{M^2}{\log(1+e^{M})}$, so that $\frac{1}{|\mathcal T|}\sum_{t\in\mathcal T}\max(\varepsilon_t,0)^2 \le n\,\mathcal L_{\mathrm{Lyap}}(\theta)$ (Appendix~\ref{app:transversal}).

\begin{restatable}[Transversal Profile Characterization]{theorem}{Tranversal}
\label{thm:transversal}
Let $\{V_t\}_{t=0}^{T}$ be the Lyapunov sequence with $V_0 = 0$, and let $\{\varepsilon_t\}$ be the innovations defined in \eqref{eq:innovation}. Assume $\gamma\in(0,1)$ and $\tau\ge 0$. Then the following hold for every trajectory, without any distributional assumption on $\{\varepsilon_t\}$:

\noindent\textbf{(i) Mean confinement.}
The time-averaged Lyapunov value satisfies
\begin{equation}
  \bar{V}_T \;\le\;\frac{\tau}{1-\gamma}\;+\; \frac{\mathcal{L}_{\mathrm{Lyap}}(\theta)}{1-\gamma}.
  \label{eq:mean-bound}
\end{equation}
\noindent\textbf{(ii) Variance control.}
The temporal variance of the Lyapunov sequence satisfies
\begin{equation}
  \operatorname{Var}_t[V_t] \;\le\; \frac{2n}{(1-\gamma)^2}\left[1+\frac{4\gamma^2}{(1-\gamma)^2}\right]\mathcal{L}_{\mathrm{Lyap}}(\theta)
  \;+\;\frac{4\tau^2}{(1-\gamma)^2}\left[1+\frac{2\gamma^2}{(1-\gamma)^2}\right]
  \;+\;\mathcal{R}_T,
  \label{eq:var-bound}
\end{equation}
where $\mathcal{R}_T$ is an explicit boundary remainder with $\mathcal{R}_T=O(\frac{1}{T})$ whenever $M$ is bounded uniformly in $T$.

\noindent\textbf{(iii) Practical tube width.}
By Jensen's inequality,
\begin{equation}
  \mathbb{E}_t\!\left[\|e_t\|_2\right] \;\le\; \sqrt{\bar{V}_T}\cdot\|v_{\mathrm{geo}}\|_2 \;\le\;  \|v_{\mathrm{geo}}\|_2 \cdot\sqrt{\frac{\tau + \mathcal{L}_{\mathrm{Lyap}}(\theta)}{1-\gamma}}.
  \label{eq:tube-width}
\end{equation}
\end{restatable}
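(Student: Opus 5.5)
The whole argument will run on the linear recursion $V_{t+1} = \gamma V_t + \tau + \varepsilon_t$ with $V_0=0$, which is just the definition \eqref{eq:innovation} rearranged. We will take $\mathcal T$ to index all transitions $t=0,\dots,T-1$, so that averages over $t$ and over $\mathcal T$ coincide up to $O(1/T)$ boundary terms. We will also write $\varepsilon_t^+=\max(\varepsilon_t,0)$.

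For \textbf{(i)}, we sum the recursion over $t=0,\dots,T-1$. Since $V_0=0$, this gives $\sum_{t=1}^{T}V_t = \gamma\sum_{t=0}^{T-1}V_t + T\tau + \sum_t\varepsilon_t$. Because $V_T\ge 0$, rearranging yields $(1-\gamma)\sum_{t<T} V_t \le T\tau+\sum_t \varepsilon_t$. We then use the elementary inequality $\varepsilon \le \varepsilon^+ \le \log(1+e^{\varepsilon})$ termwise, so $\frac1T\sum_t\varepsilon_t\le \mathcal L_{\mathrm{Lyap}}$. Dividing by $T(1-\gamma)$ gives \eqref{eq:mean-bound}, with the averaging window chosen so no boundary term appears.

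For \textbf{(iii)}, Cauchy--Schwarz (Jensen for $\sqrt{\cdot}$) gives $\mathbb E_t\|e_t\|_2 = \|v_{\mathrm{geo}}\|_2\,\mathbb E_t\sqrt{V_t}\le \|v_{\mathrm{geo}}\|_2\sqrt{\bar V_T}$. Inserting (i) gives \eqref{eq:tube-width}.

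For \textbf{(ii)}, we will avoid a naive $\gamma$-series bound and work with the first-difference identity instead:
\[
(1-\gamma)(V_t-\bar V) = -(V_{t+1}-V_t) + (\tau-(1-\gamma)\bar V) + \varepsilon_t .
\]
We split $\varepsilon_t=\varepsilon_t^+-\varepsilon_t^-$. The positive part is controlled in mean square by the stated fact $\frac1{|\mathcal T|}\sum(\varepsilon_t^+)^2\le n\mathcal L_{\mathrm{Lyap}}$. This follows because $x^2/\log(1+e^x)$ is increasing on $x\ge0$, so $(\varepsilon_t^+)^2\le n\log(1+e^{\varepsilon_t})$.

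The negative part is harder, since no loss term controls it directly. It is bounded through the recursion itself: $\varepsilon_t^-\le \gamma V_t+\tau$ because $V_{t+1}\ge0$. So large negative innovations only occur when $V_t$ is large, and their size is capped by $\gamma V_t + \tau$. The increments satisfy $V_{t+1}-V_t = -(1-\gamma)V_t+\tau+\varepsilon_t$, so they are expressed through the same quantities.

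The plan is to square the identity and apply $(a+b)^2\le 2a^2+2b^2$ twice. One factor of $2$ separates the innovation part from the $\tau$ and mean-shift part; another separates the $\gamma$-weighted pieces. This is the origin of the constants $2n/(1-\gamma)^2$, $4\gamma^2/(1-\gamma)^2$, $4\tau^2/(1-\gamma)^2$ and $2\gamma^2/(1-\gamma)^2$. We then average over $t$, and bound the cross sums $\frac1T\sum_t (V_{t+1}-V_t)(V_t-\bar V)$ by a telescoping identity. That identity gives $\frac{1}{2T}(V_T^2-V_0^2)$ minus a nonnegative term. The terms produced by telescoping and by the mismatch between the window $\{1,\dots,T\}$ and $\mathcal T$ are collected into $\mathcal R_T$. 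Since $V_T\le \tau/(1-\gamma)+M/(1-\gamma)$ by iterating $V_{t+1}\le\gamma V_t+\tau+M$, we get $\mathcal R_T=O(M^2/T)$, hence $O(1/T)$ for uniformly bounded $M$.

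The main obstacle will be handling $\varepsilon_t^-$ so that it contributes only through the $\gamma^2$ and $\tau^2$ factors rather than an uncontrolled term. We will first try absorbing $\gamma V_t$ into $\gamma\,(V_t-\bar V)+\gamma\bar V$, and then use (i) to bound $\bar V$. This bound on $\bar V$ is what introduces the extra $1/(1-\gamma)^2$ factors in the brackets. If that absorption leaves a residual $\gamma^2\mathrm{Var}$ on the right-hand side, we will move it to the left. This is valid since $\gamma<1$, and the constants should remain of the displayed form.
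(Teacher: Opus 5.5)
Parts (i) and (iii) match the paper's argument. The gap is in (ii), in how you handle $\varepsilon_t^-$. From $\varepsilon_t^-\le\gamma V_t+\tau$ and the split $V_t=(V_t-\bar V_T)+\bar V_T$ you get $D:=\frac1T\sum_t(\varepsilon_t^-)^2\le\frac1T\sum_t(\gamma V_t+\tau)^2=\gamma^2\operatorname{Var}_t[V_t]+(\gamma\bar V_T+\tau)^2$. In your identity, however, the innovations are divided by $1-\gamma$. So $D$ enters the variance bound with a factor of at least $1/(1-\gamma)^2$, and the residual is $\frac{\gamma^2}{(1-\gamma)^2}\operatorname{Var}_t[V_t]$ (times your factors of $2$), not $\gamma^2\operatorname{Var}_t[V_t]$. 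Even with no losses you reach $(1-\gamma)^2\operatorname{Var}\le P+\gamma^2\operatorname{Var}+\cdots$, where $P:=\frac1T\sum_t(\varepsilon_t^+)^2$. That is $(1-2\gamma)\operatorname{Var}\le\cdots$, which is vacuous for $\gamma\ge\frac12$, the regime actually used ($\gamma\in[0.8,0.95]$). So ``valid since $\gamma<1$'' is false.

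Bounding $\bar V_T$ via (i) is a second problem. It brings in $\mathcal L_{\mathrm{Lyap}}^2$, which $n\,\mathcal L_{\mathrm{Lyap}}$ does not control: if all $\varepsilon_t=0$, then $n=0$ but $\mathcal L_{\mathrm{Lyap}}=\log 2$. So your constants would not be of the displayed form either.

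The missing idea is to bound the second moment rather than the variance, using the one-sided recursion $V_{t+1}\le\gamma V_t+\tau+\varepsilon_t^+$. Unrolling from $V_0=0$ gives $V_t\le\frac{\tau}{1-\gamma}+\sum_{s<t}\gamma^{t-1-s}\varepsilon_s^+$. Young's convolution inequality then gives $\frac1T\sum_t V_t^2\le\frac{2(\tau^2+P)}{(1-\gamma)^2}$, with $P\le n\,\mathcal L_{\mathrm{Lyap}}$. Only positive violations appear, so nothing needs absorbing. This is how the paper obtains $D\le2\tau^2+\frac{4\gamma^2(\tau^2+P)}{(1-\gamma)^2}$, and it is the real source of the bracketed $\gamma^2/(1-\gamma)^2$ terms. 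In fact, since $\operatorname{Var}_t[V_t]\le\frac1T\sum_t V_t^2$, this bound alone already lies below the right-hand side of (ii).

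Your telescoping step also does not close as written. If you use $(a+b)^2\le2a^2+2b^2$ to split off $\Delta_t:=V_{t+1}-V_t$, you are left with $\frac2T\sum_t\Delta_t^2$, which is $O(1)$, not $O(1/T)$. The cross sum $\frac1T\sum_t\Delta_t(V_t-\bar V_T)$ appears only if you square $(1-\gamma)(V_t-\bar V_T)+\Delta_t=\tau-(1-\gamma)\bar V_T+\varepsilon_t$ exactly. It then sits on the left with coefficient $2(1-\gamma)$. The telescoping remainder $-\frac{1-\gamma}{T}\sum_t\Delta_t^2$ therefore has the wrong sign to drop. You must instead cancel it against the $\frac1T\sum_t\Delta_t^2$ from the square, which leaves $\frac{\gamma}{T}\sum_t\Delta_t^2\ge0$. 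Using the mean identity $\tau-(1-\gamma)\bar V_T=-\bar\varepsilon+V_T/T$, this gives
\[
(1-\gamma)^2\operatorname{Var}_t[V_t]\le\frac1T\sum_{t=0}^{T-1}(\varepsilon_t-\bar\varepsilon)^2+\frac{V_T^2}{T^2}+\frac{(1-\gamma)(2V_T\bar V_T-V_T^2)}{T}.
\]
This is a valid replacement for the paper's unroll-and-Young step, and it avoids the paper's factor $2$ from splitting $S_t+R_t$. Combined with the second-moment bound on $D$ above, your route then goes through.
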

The full proof is in Appendix~\ref{app:transversal}.  Together, \eqref{eq:mean-bound} and \eqref{eq:var-bound} relate the mean transversal level and its temporal fluctuations to the Lyapunov Tube Loss, the partial-ISS parameters $(\gamma,\tau)$, and the trajectory-dependent violation factor $n$. Under additional uniform control of the maximal positive innovation $M$, the variance bound also yields a horizon-independent control of the transversal fluctuations.

\section{Experimental Results}
\label{sec:experiments}
In this section, we empirically validate ControlJEPA against standard fine-tuning and geodesic regularizers, showing that ISS-based trajectory control improves generalization, sample efficiency, and structural stability beyond heuristic geometric methods.

\subsection{Experimental Setup}
\label{setup}



We evaluate ControlJEPA following Semantic Tube Prediction~\cite{huang2026semantic} across diverse tasks, architectures, and scales. Benchmarks include NL-RX-SYNTH, NL-RX-TURK~\cite{locascio}, GSM8K~\cite{cobbe2021trainingverifierssolvemath}, Spider~\cite{yu-etal-2018-spider}, NQ-Open~\cite{lee-etal-2019-latent}, and HellaSwag~\cite{zellers-etal-2019-hellaswag}. Evaluated models span Llama-3.2 (1B/3B/8B)~\cite{grattafiori2024llama3herdmodels}, Gemma-2-2b-it~\cite{gemmateam2024gemma2improvingopen}, OpenELM-1.1B~\cite{mehta2024openelmefficientlanguagemodel}, OLMo-2-1B~\cite{olmo20252olmo2furious}, Qwen3-1.7B~\cite{yang2025qwen3technicalreport}, and DeepSeek-R1-Distill-Qwen-1.5B~\cite{Guo_2025}. We compare against standard fine-tuning, LLM-JEPA~\cite{huang2025llm}, and Semantic Tube Prediction~\cite{huang2026semantic}. All methods share an identical codebase, splits, decoding, validation rules, and 5 random seeds (82, 23, 37, 84, 4). Baselines use their published, dataset-tuned optimal epochs and learning rates; ControlJEPA fixes these shared hyperparameters and tunes only its stability parameters (Lyapunov penalty $\lambda$, contraction rate $\gamma$, ISS slack $\tau$), conservatively favoring the baselines in tuning budget. All experiments ran on two NVIDIA H100 80GB GPUs.



\subsection{Accuracy and Generalization}
\begin{figure}[t]
    \centering
    \includegraphics[width=0.9\columnwidth]{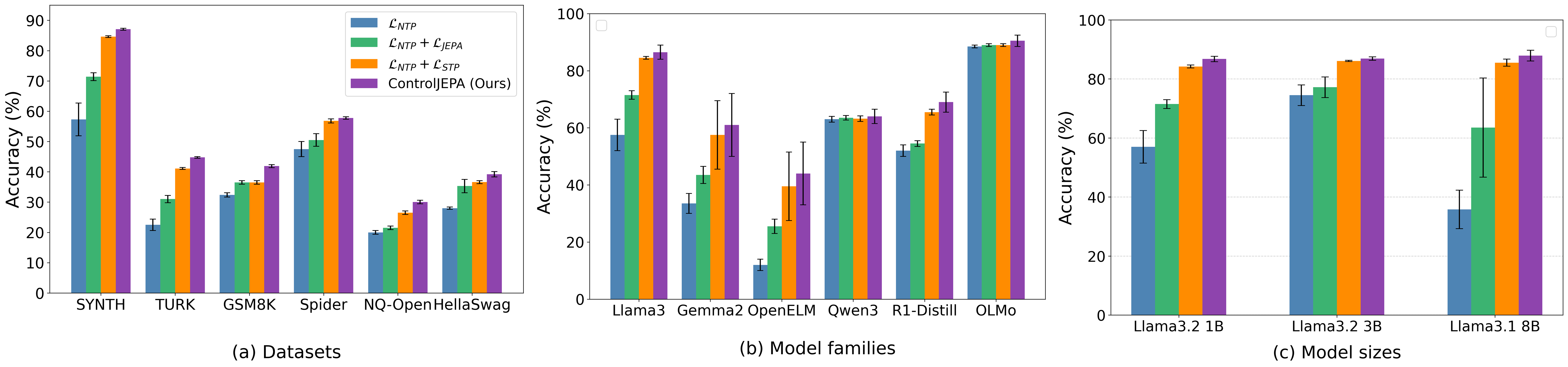}
    \vspace{-0.5em}
    \caption{Accuracy and generalization performance of $\mathcal{L}_{NTP}$, LLM-JEPA, STP, and ControlJEPA across (a) Datasets, (b) Model families, and (c) Model sizes.}
    \label{fig:acc}
    \vspace{-1em}
\end{figure}
As shown in Figure~\ref{fig:acc}, ControlJEPA consistently outperforms all baselines across datasets, architectures, and model scales. Its gains are especially pronounced on complex reasoning tasks such as GSM8K and NQ-Open, where heuristic methods like STP suffer from unconstrained error magnitudes. By enforcing the Lyapunov condition $V_{t+1} \le \gamma V_t + \tau$, ControlJEPA bounds transversal drift and keeps hidden states anchored to the semantic geodesic over long contexts. Its latent-manifold formulation also generalizes across Llama3, Gemma2, OpenELM, Qwen3, R1-Distill, and OLMo, and scales smoothly across Llama-3 models from 1B to 8B, with the dimensionless Lyapunov candidate preventing both scale collapse and representational bottlenecks.


\subsection{Data Efficiency and Signal-to-Noise Ratio}


\begin{figure}[t]
    \centering
    \includegraphics[width=1\columnwidth]{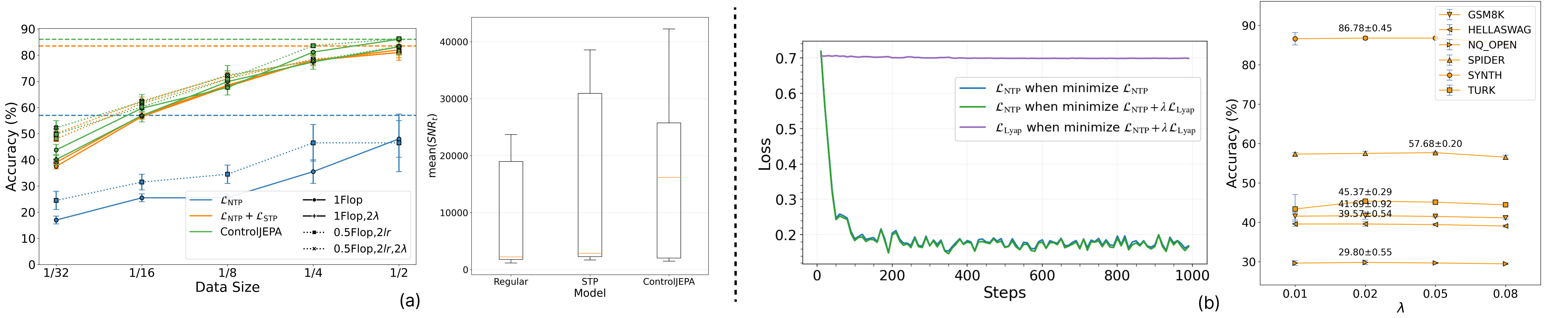}
    \caption{(a) Data Efficiency and Signal-to-Noise ratio. (b) Training Dynamics and Hyperparameter Robustness.}
    \label{fig:de_snr_lhp}
\end{figure}


A key benefit of trajectory noise filtering is improved sample efficiency. As shown in Figure~\ref{fig:de_snr_lhp}(a) (Left), ControlJEPA outperforms standard fine-tuning and STP in moderate data regimes and remains competitive under severe subsampling, matching full-dataset $\mathcal{L}_{NTP}$ performance with only a fraction of the data. This efficiency is consistent with an improved signal-to-noise ratio: by constraining orthogonal deviations through the ISS condition with slack $\tau$, ControlJEPA can extract semantic alignment signals more effectively. Using the previously defined displacement decomposition, we quantify this effect on open-loop greedy generations as
\begin{equation}
\mathrm{SNR}_t
=
\frac{|\langle d_t, \hat{v} \rangle \hat{v}|_2^2}
{||e_t||_2^2}
=
\frac{\langle d_t, \hat{v} \rangle^2}
{||e_t||_2^2},
\label{eq}
\end{equation}
which measures the ratio of aligned displacement energy to orthogonal residual energy. Figure~\ref{fig:de_snr_lhp}(a) (Right) supports this interpretation, showing that ControlJEPA achieves a substantially higher median SNR than both baselines.

\subsection{Training Dynamics and Hyperparameter Sensitivity}
\label{hyper}
As shown in Figure~\ref{fig:de_snr_lhp}(b) (Left), ControlJEPA is both optimization-stable and robust to hyperparameters. The $\mathcal{L}{\mathrm{NTP}}$ curves overlap when trained alone or jointly with ControlJEPA, indicating that the geometric constraint improves trajectory structure without harming language modeling. The Lyapunov loss stabilizes near $0.69$, matching the softplus minimum $\ln(1+e^0)\approx0.693$ once the ISS condition $V_{t+1}-\gamma V_t-\tau\le0$ is satisfied. Moreover, ControlJEPA maintains consistently high accuracy across $\lambda\in[0.01,0.08]$, providing a broad stability region and reducing the need for exhaustive tuning as show in Figure~\ref{fig:de_snr_lhp}(b) (Right).


\section{Empirical Analysis}
\label{sec:analysis}
\subsection{Implicit Tube Preservation at Inference Time}


At generation time, the model operates open-loop, with the Lyapunov loss deactivated. To test whether the learned geometry generalizes, Figure~\ref{fig:infer_time}(a) plots the unnormalized squared transversal error (Left) and its variance (Right) on unseen trajectories. Standard fine-tuning shows substantial drift, while STP reduces the mean error but remains highly oscillatory. In contrast, ControlJEPA achieves the lowest and most stable transversal error, keeping hidden states tightly confined within the semantic tube even without an active penalty. This suggests that ISS-based training reshapes the representation manifold rather than merely memorizing training trajectories.

\begin{figure}[t]
    \centering    \includegraphics[width=1\columnwidth]{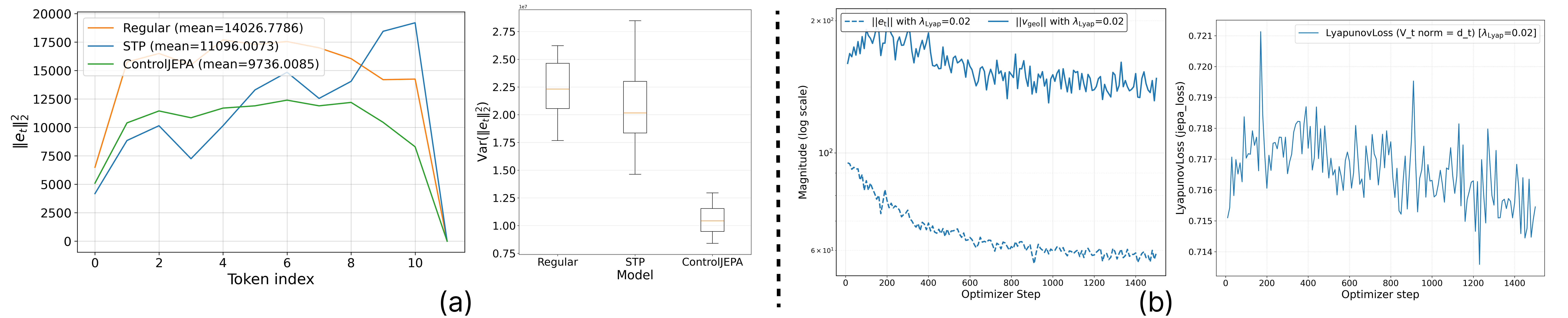}
    \vspace{-1.5em}
    \caption{(a) Inference-time trajectory of squared transversal error $\|e_t\|_2^2$ across generated tokens in NL-RX-SYNTH dataset. (b) Without normalization, the representation scale collapses, shrinking both $\|e_t\|$ and $\|v_{\mathrm{geo}}\|$ decrease over time. \textbf{(Right)} Normalizing by local displacement introduces severe heteroscedastic instability and volatile loss dynamics.}
    \label{fig:infer_time}
    \vspace{-1em}
\end{figure}



\subsection{Preventing Dimensionality Collapse and Heteroscedastic Instability}

The dimensionless Lyapunov candidate
$V_t = \frac{\|e_t\|_2^2}{\|v_{\mathrm{geo}}\|_2^2}$
is designed to avoid two key failure modes in trajectory optimization: scale collapse and heteroscedastic instability.

\textbf{Preventing Scale Collapse.} Penalizing the unnormalized transversal energy, $V_t=|e_t|_2^2$, encourages a trivial solution that shrinks the entire representation space, with $d_t\to0$ and $v_{\mathrm{geo}}\to0$. As shown in Figure~\ref{fig:infer_time}(b) (Left), this collapses the chord length $|v_{\mathrm{geo}}|_2^2$, destroys manifold capacity, and reduces accuracy to 6.3\%. Normalizing by a length scale makes $V_t$ scale-invariant and prevents this collapse.


\textbf{Preventing Heteroscedastic Instability.} Normalizing by the local displacement, $V_t = \frac{\|e_t\|_2^2}{\|d_t\|_2^2}$, destabilizes the ISS condition because $\|d_t\|_2^2$ grows over time, creating a moving denominator that assigns inconsistent penalties across positions. This heteroscedasticity leads to volatile loss dynamics and suboptimal convergence, as shown in Figure~\ref{fig:infer_time}(b) (Right). In contrast, the global chord length $\|v_{\mathrm{geo}}\|_2^2$ provides a sequence-level invariant for uniformly measuring Lyapunov violations.

\subsection{The Necessity of Orthogonal Slack and Joint Equilibrium}
To validate Proposition~\ref{prop:capacity} and the Partial-ISS Regime discussed in Section~\ref{sec:partial-iss}, 
we ablate the contraction rate $\gamma$ and 
track the orthogonal 
slack $|e_t|$ and macroscopic chord $|v_{\mathrm{geo}}|$ during training with $\lambda_{\mathrm{Lyap}}$ = 0.05, as shown in Figure~\ref{fig:U_op}(b).

{\bf Capacity Collapse under Strict Constraints.} With a strong contraction, $\gamma=0.8$, ControlJEPA approximates the strict geodesic constraint of STP, driving $|e_t|$ toward zero. 
As predicted by Proposition~\ref{prop:capacity}, removing this orthogonal slack collapses the hidden-state rank, restricts expressivity, and reduces downstream accuracy to 28.7\%, exposing the limitation of static geodesic interpolation.\\
{\bf Geometric Linearisation via Chord Expansion.} Under optimal soft regularization, $\gamma=0.95$, the system converges to the joint equilibrium and stabilizes at a positive value, preserving representational capacity and yielding 86.78\% accuracy, while $|v_{\mathrm{geo}}|$ grows substantially. This confirms that ControlJEPA achieves geometric linearization not by eliminating microscopic slack, but by expanding the global reference frame under the NTP objective.






\section{Related Work}
\label{sec:related_work}

{\bf Joint-Embedding Predictive Architectures.} JEPA~\cite{lecun2022path} learns representations by predicting target embeddings from context, and has been extended to vision~\citep{assran2023self,bardes2024v,assran2025v,mur2026v,hu20243d,nam2026causaljepalearningworldmodels}, language and multimodal learning~\citep{huang2025llm,chen2025vl}, sequential and signal-based settings~\citep{thimonier2024t,ghaemi2025seq,guetschel2024s}, and WorldModel and energy-based models~\citep{terver2026lightweight, maes2026leworldmodelstableendtoendjointembedding}. While many methods address dimensional collapse through embedding-space regularization~\citep{jing2021understanding,bardes2021vicreg,balestriero2511lejepa,hartman2025sparsejepa,kuang2026rectified,mo2024connecting}, STP~\citep{huang2026semantic} instead regularizes hidden-state trajectories via cosine-based geodesic alignment. ControlJEPA replaces cosine alignment with a one-sided, scale-invariant Lyapunov contraction constraint on transversal deviation, avoiding collapse through partial enforcement and normalization.

{\bf Geometric Regularization of Hidden-State Trajectories.} Trajectory straightening has been observed in both vision~\citep{henaff2019perceptual} and language~\citep{hosseini2023large}, motivating methods such as Temporal Straightening~\citep{wang2026temporal} and STP~\citep{huang2026semantic}. However, these cosine-based regularizers control only the \emph{direction} of motion, not the \emph{magnitude} of deviation. ControlJEPA penalizes violations of $V_{t+1}\le\gamma V_t+\tau$, bounding deviation where satisfied and its mean otherwise.

{\bf Control-Theoretic Methods for Deep Learning.} Control-theoretic methods enforce stability through either architectural design~\citep{haber2018stable,nguyen2023mitigating,nguyen2024pidformer,abdullaev2025transformer,nguyen2026activation,nguyen2026muonssm} or auxiliary losses~\citep{rodriguez2022lyanet,tsukamoto2020neural}. ControlJEPA differs in two ways: it uses a discrete-time practical-ISS contraction condition suited to step-wise hidden-state dynamics, and enforces it as a soft trajectory-level penalty rather than an architectural constraint.

\section{Concluding Remarks}
\label{sec:conclusion}


We presented ControlJEPA, a trajectory regularization framework built on a discrete-time ISS-type contraction condition. Its Lyapunov Tube Loss penalizes violations of this condition on normalized transversal deviation. Wherever the condition holds, it implies an explicit tube-width bound. In the partial-ISS regime used in practice, the effective width is governed by the penalty strength and the contraction rate. Theoretical analysis identifies the partial-ISS regime as the balance point between geometric regularity and representational capacity. Empirically, ControlJEPA consistently improves accuracy and data efficiency over standard fine-tuning and prior trajectory regularizers across six benchmarks and six model families. A smoother transversal profile also persists at inference time, suggesting that the method reshapes the representation manifold rather than memorizing trajectories.

{\bf Limitations.} Our guarantees apply only to trajectories observed during training and do not extend to out-of-distribution settings. Inference-time stability remains empirical, and experiments are limited to fine-tuning models up to 8B parameters. In addition, semantic anchors currently depend on dataset-provided segmentation.

{\bf Future directions.} Future work includes extending the framework to vision and multimodal models, studying the interaction between $\mathcal{L}_{\mathrm{NTP}}$ and $\mathcal{L}_{\mathrm{Lyap}}$, and incorporating Lyapunov constraints directly into decoding for inference-time geometric control.
\medskip

\begin{ack}
This paper is dedicated to Professor Richard G. Baraniuk on the occasion of his 60th birthday.

This research is supported by the National Research Foundation Singapore under the AI Singapore Programme (AISG Award No: AISG2-TC-2023-012-SGIL). This research is also supported by the Ministry of Education, Singapore, under the Academic Research Fund Tier 1 (FY2023) (A-8002040-00-00, A-8002039-00-00), the NUS Presidential Young Professorship Award (A-0009807-01-00), the NUS Artificial Intelligence Institute--Seed Funding (A-8003062-00-00), and the Cross Faculty Grant 2025, CFG25-012 (A-8004460-00-00).
\end{ack}

\bibliographystyle{plainnat}
\bibliography{biblio}




\newpage 
\appendix
\begin{center}
{\bf \Large{Supplement to ``ControlJEPA: Principled Trajectory Regularization via Lyapunov Tube Loss''}}
\end{center}

\DoToC

\section{Theoretical Proofs and Supplementary Analysis}

\subsection{Proof of Proposition~\ref{prop:capacity}}
\label{app:capacity}

\subsubsection{Set up and notations}
Throughout, $\|\cdot\|_2$ denotes the Euclidean norm on vectors, $\|\cdot\|_F$ the Frobenius norm, $\|A\|_{\mathrm{op}}=\sigma_1(A)$ the operator norm, and $\sigma_1(A)\ge\sigma_2(A)\ge\cdots$ the singular values in decreasing order, indexed from one.

Let $h_t\in\mathbb R^{D}$ denote the hidden state at position $t$ after the final normalisation and before the output projection, let $U\in\mathbb R^{|\mathcal V|\times D}$ be the unembedding matrix, and let $H\in\mathbb R^{T\times D}$ have rows $h_t^\top$, so that the logit matrix is $L=HU^\top$ with rows $L_t=Uh_t$ and
$p_\theta(\cdot\mid x_{<t})=\operatorname{softmax}(L_t)$.

Using the orthogonal decomposition $h_t-h_0=\alpha_t\hat v\|v_{\text{geo}}\|_2+e_t$ with $e_t\perp\hat v$, set
\[
  \widetilde H:=\mathbf 1h_0^\top+\alpha\big(\hat v\|v_{\text{geo}}\|_2\big)^\top,\qquad \widetilde L:=\widetilde HU^\top,\qquad E:=H-\widetilde H,
\]
so $E$ has rows $e_t^\top$ and $L-\widetilde L=EU^\top$. Write $\mathcal S:=\operatorname{span} h_0,v_{\text{geo}})$, so $\dim\mathcal S\le2$ and every row $\tilde h_t=h_0+\alpha_tv_{\text{geo}}$ of $\widetilde H$ lies in $\mathcal S$.

\subsubsection{Two lemmas}

\begin{lemma}[Rank of the reference logits]
  \label{lem:rank}
  $\operatorname{rank}(\widetilde L)\le2$.
\end{lemma}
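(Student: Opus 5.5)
The plan is to bound the row space of $\widetilde H$ and then pass that bound through the multiplication by $U^\top$.

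First, record that every row of $\widetilde H$ lies in a subspace of dimension at most two. The $t$-th row of $\widetilde H$ is $\tilde h_t^\top=(h_0+\alpha_t v_{\mathrm{geo}})^\top$, using $\hat v\|v_{\mathrm{geo}}\|_2=v_{\mathrm{geo}}$. This row is a linear combination of $h_0$ and $v_{\mathrm{geo}}$, so it lies in $\mathcal S=\operatorname{span}(h_0,v_{\mathrm{geo}})$. Hence the row space of $\widetilde H$ is contained in $\mathcal S$ and $\operatorname{rank}(\widetilde H)\le\dim\mathcal S\le 2$. The same conclusion follows directly from $\widetilde H=\mathbf 1h_0^\top+\alpha v_{\mathrm{geo}}^\top$: this is a sum of two outer products, each of rank at most one, and rank is subadditive.

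Next, transfer the bound to $\widetilde L$. Write $\widetilde L=\widetilde H U^\top$. Each row of $\widetilde L$ equals $\tilde h_t^\top U^\top=(U\tilde h_t)^\top$, which lies in the image $U\mathcal S$. A linear map cannot increase dimension, so $\dim U\mathcal S\le\dim\mathcal S\le 2$. Equivalently, $\operatorname{rank}(AB)\le\min(\operatorname{rank}A,\operatorname{rank}B)$, and taking $A=\widetilde H$, $B=U^\top$ gives $\operatorname{rank}(\widetilde L)\le\operatorname{rank}(\widetilde H)\le2$.

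There is no genuine obstacle. The only care needed is the degenerate cases, namely $h_0\parallel v_{\mathrm{geo}}$, $h_0=0$, or $\alpha=0$. In each of these $\dim\mathcal S$ drops to one or zero, so the inequality $\le 2$ still holds and the statement remains true.
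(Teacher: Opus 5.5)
Your proof is correct and follows the paper's argument: the paper writes $\widetilde H=\mathbf 1h_0^\top+\alpha v_{\mathrm{geo}}^\top$ as a sum of two outer products of rank at most one, then uses $\operatorname{rank}(AB)\le\min\{\operatorname{rank}(A),\operatorname{rank}(B)\}$ to pass the bound through multiplication by $U^\top$. Your row-space phrasing via $\mathcal S=\operatorname{span}(h_0,v_{\mathrm{geo}})$ is an equivalent restatement of that argument.
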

\begin{proof}
  $\widetilde H=\mathbf 1h_0^\top+\alpha v_{\text{geo}}^\top$ is a sum of two outer products, each of rank at most one, so $\operatorname{rank}(\widetilde H)\le2$.  Since $\operatorname{rank} AB)\le\min\ \operatorname{rank}(A),\operatorname{rank}(B)\}$, right multiplication by $U^\top$ cannot increase rank.
\end{proof}

\begin{lemma}[Softmax is $\tfrac12$-Lipschitz in $\ell_2$]
  \label{lem:lip}
  For all $z,z'\in\mathbb R^{|\mathcal V|}$,
  $\|\operatorname{softmax}(z)-\operatorname{softmax}(z')\|_2\le\tfrac12\|z-z'\|_2$.
\end{lemma}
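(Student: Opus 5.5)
The plan is to work through the Jacobian of the softmax map. Write $s(z)=\operatorname{softmax}(z)$ and $p=s(z)$. The Jacobian of $s$ at $z$ is
\[
J(z)=\operatorname{diag}(p)-pp^\top,
\]
which is symmetric and positive semidefinite. By the mean value inequality applied along the segment $z_\lambda=z'+\lambda(z-z')$, $\lambda\in[0,1]$,
\[
\|s(z)-s(z')\|_2=\Bigl\|\int_0^1 J(z_\lambda)(z-z')\,d\lambda\Bigr\|_2\le\sup_{\lambda}\|J(z_\lambda)\|_{\mathrm{op}}\,\|z-z'\|_2 .
\]
So it suffices to show $\|J(z)\|_{\mathrm{op}}\le\tfrac12$ for every $z$. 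Since $J$ is symmetric PSD, its operator norm equals its largest eigenvalue, $\lambda_{\max}(J)=\max_{\|u\|_2=1}u^\top J u$.

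For a unit vector $u$, the quadratic form is a variance:
\[
u^\top Ju=\sum_i p_iu_i^2-\Bigl(\sum_i p_iu_i\Bigr)^2=\operatorname{Var}_{i\sim p}(u_i).
\]
Here $i$ is drawn from the categorical distribution $p$. The main step is bounding this variance by $\tfrac12$ using only $\sum_i u_i^2=1$.

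First I would try a direct argument. Let $m=\sum_i p_iu_i$. Then
\[
\operatorname{Var}_p(u)=\sum_i p_i(u_i-c)^2-(m-c)^2\le\sum_i p_i(u_i-c)^2
\]
for any $c$. This alone does not give $\tfrac12$ for general $p$, so I expect this to be the obstacle.

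The cleaner route is to use symmetry: $\operatorname{Var}_p(u)=\tfrac12\sum_{i,j}p_ip_j(u_i-u_j)^2$. Expanding and using $\sum_i p_i=1$ gives
\[
\operatorname{Var}_p(u)=\sum_i p_i u_i^2-\Bigl(\sum_i p_iu_i\Bigr)^2\le\sum_i p_i(1-p_i)u_i^2+\sum_{i\ne j}p_ip_j|u_i||u_j| .
\]
Rather than push these estimates, I would bound $\lambda_{\max}$ via Gershgorin. Row $i$ of $J$ has diagonal entry $p_i(1-p_i)$ and off-diagonal absolute sum $p_i\sum_{j\neq i}p_j=p_i(1-p_i)$. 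Hence every eigenvalue is at most
\[
2\max_i p_i(1-p_i)\le 2\cdot\tfrac14=\tfrac12 .
\]
This gives exactly the constant $\tfrac12$. Combined with the mean value inequality above, it proves the lemma.

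The bound is sharp: with two classes at $p=(\tfrac12,\tfrac12)$ and $u=(1,-1)/\sqrt2$, the variance equals $\tfrac12$.
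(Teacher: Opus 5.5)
Your proof is correct. It shares the paper's scaffolding: the Jacobian $J(z)=\operatorname{diag}(p)-pp^\top$, integration along the segment, and the reduction to $\|J(z)\|_{\mathrm{op}}\le\tfrac12$. The key eigenvalue bound, however, is obtained differently.

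The paper stays with the variance identity $x^\top Jx=\operatorname{Var}_p(X)$ and applies Popoviciu's inequality, $\operatorname{Var}_p(X)\le(x_{\max}-x_{\min})^2/4$. It then uses $(x_{\max}-x_{\min})^2\le 2(x_{\max}^2+x_{\min}^2)\le 2\|x\|_2^2$, which holds because the extremes sit at distinct coordinates.

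You instead apply Gershgorin to the explicit entries. Each disc is $[0,2p_i(1-p_i)]$, so this amounts to bounding the spectral norm of a symmetric matrix by its maximal absolute row sum.

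\emph{What each route buys.} Your route is purely mechanical and does not even need positive semidefiniteness, since the discs already control $|\lambda|$. It also yields the pointwise refinement $\|J(z)\|_{\mathrm{op}}\le 2\max_i p_i(1-p_i)$, which is small near one-hot distributions. The paper's route gives a bound uniform in $p$ directly. It also makes the extremal configuration transparent: mass $\tfrac12$ on two coordinates, with $x$ antisymmetric on them, which matches your sharpness example.

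One correction to your exploratory steps: the ``direct argument'' you abandoned does work. Take $c=(u_{\max}+u_{\min})/2$. Then $\sum_ip_i(u_i-c)^2\le(u_{\max}-u_{\min})^2/4\le\tfrac12$, which is exactly Popoviciu, and hence exactly the paper's proof. The intermediate expansion with $|u_i||u_j|$ leads nowhere and can simply be deleted.
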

\begin{proof}
Write $p=\operatorname{softmax}(z)$.  Differentiating $p_i=e^{z_i}/\sum_je^{z_j}$ gives the Jacobian $J(z)=\operatorname{diag}(p)-pp^\top$, which is symmetric. For any $x$,
  \[
    x^\top J(z)x=\sum_ip_ix_i^2-\Big(\sum_ip_ix_i\Big)^2=\operatorname{Var}_p(X),
  \]
where $X$ takes the value $x_i$ with probability $p_i$.  Hence $J(z)$ is positive semidefinite and, by symmetry and the variational characterisation of the largest eigenvalue, $\|J(z)\|_{\mathrm{op}}=\lambda_{\max}(J(z)) =\sup_{\|x\|_2=1}\operatorname{Var}_p(X)$.  Since $X$ takes values in  $[x_{\min},x_{\max}]$, Popoviciu's inequality gives $\operatorname{Var}_p(X)\le(x_{\max}-x_{\min})^2/4$, and $(x_{\max}- x_{\min})^2\le2(x_{\max}^2+x_{\min}^2)\le2\|x\|_2^2$ because the extremes are attained at distinct coordinates. Therefore $\|J(z)\|_{\mathrm{op}}\le\tfrac12$ for every $z$.

The domain $\mathbb R^{|\mathcal V|}$ is convex, so the segment $\gamma(s)=z'+s(z-z')$, $s\in[0,1]$, lies in it.  By the chain rule and the fundamental theorem of calculus, $\operatorname{softmax}(z)-\operatorname{softmax}(z')   =\int_0^1J(\gamma(s))(z-z')\,ds$, and taking norms,
  \[
    \big\|\operatorname{softmax}(z)-\operatorname{softmax}(z')\big\|_2   \le\Big(\sup_{s\in[0,1]}\|J(\gamma(s))\|_{\mathrm{op}}\Big)\|z-z'\|_2
    \le\tfrac12\|z-z'\|_2 . \qedhere
  \]
\end{proof}

\subsubsection{Proof of Proposition~\ref{prop:capacity}}
\Capacity*
\begin{proof}
By the Eckart-Young-Mirsky theorem, the squared spectral tail of $L$ equals its best rank-two approximation error, and $\Tilde{L}$ is one admissible competitor:
\begin{equation}
    \label{eq:sum-sing}
    \sum_{j>2}\sigma_j(L)^2 = \min_{\operatorname{rank}(R) \le 2} |L - R|_F^2 \leq |L - \Tilde{L}|_F^2 =|EU^\top|_F^2.
\end{equation}
By submultiplicativity, 
\begin{equation}
|EU^\top|_F^2\leq |E|^2_F|U^\top|_{op}^2=|E|^2_F|U|_{op}^2=(\sum_t{\|e_t\|^2_2})|U|_{op}^2 \leq T\delta^2 |U|_{op}^2,
\label{eq:final-bound}
\end{equation}
which gives \eqref{eq:singular-bound} dividing by $T$.
For \eqref{eq:prob-dist}, set $\mathcal N:=\{\operatorname{softmax}(Uw):w\in\mathcal S\}$, the image of
$\mathcal S$ under the smooth map $w\mapsto\operatorname{softmax}(Uw)$; since $\dim\mathcal S\le2$, this set is smoothly parametrised by at most two parameters. Because $\tilde h_t\in\mathcal S$, the point $\operatorname{softmax}(\widetilde L_t)$ lies in $\mathcal N$ and is admissible in the distance.  Row-wise, $L_t-\widetilde L_t=U(h_t-\tilde h_t)=Ue_t$, so $\|L_t-\widetilde L_t\|_2\le\|U\|_{\mathrm{op}}\delta$, and Lemma~\ref{lem:lip} gives $\operatorname{dist}_{\ell_2}(p_\theta(\cdot\mid x_{<t}),\mathcal N) \le\tfrac12\|U\|_{\mathrm{op}}\delta$. Squaring and averaging over $t$ gives \eqref{eq:prob-dist}.
\end{proof}

\subsection{Proof of Theorem~\ref{thm:transversal}}
\label{app:transversal}

\Tranversal*

\begin{proof}
 
Since $e_0 = 0$ by construction ($h_{t_{\mathrm{start}}}$ is the starting anchor and $d_0 = h_{t_{\mathrm{start}}} - h_{t_{\mathrm{start}}} = 0$), we have $V_0 = \|e_0\|_2^2 / \|v_{\mathrm{geo}}\|_2^2 = 0$. The exact recurrence holds at every step:
\begin{equation}
  V_{t+1} \;=\; \gamma V_t + \tau + \varepsilon_t,
  \qquad t = 0, 1, \ldots, T-1,
  \label{eq:exact-rec-app}
\end{equation}
where $\varepsilon_t$ is defined in \eqref{eq:innovation}.

\textbf{Softplus lower bound.} For every $x\in\mathbb{R}$,
\begin{equation}
  \log(1+e^{x}) \;\ge\; x^{+} := \max(x,0),
  \label{eq:sp-lb}
\end{equation}
since $\log(1+e^{x}) \ge \log e^{x} = x$ and $\log(1+e^{x}) \ge \log 1 = 0$.

\textbf{Proof of part (i): Mean confinement.}

Sum \eqref{eq:exact-rec-app} over $t = 0, \ldots, T-1$:
\begin{equation}
  \sum_{t=0}^{T-1} V_{t+1} \;=\; \gamma \sum_{t=0}^{T-1} V_t + T\tau + \sum_{t=0}^{T-1}\varepsilon_t.
  \label{eq:sum-rec-app}
\end{equation}
Write $\sum_{t=0}^{T-1} V_{t+1} = \sum_{t=1}^{T} V_t = \sum_{t=0}^{T-1} V_t - V_0 + V_T = \sum_{t=0}^{T-1} V_t + V_T$ (using $V_0 = 0$). Substituting into \eqref{eq:sum-rec-app}:
\begin{equation}
  \sum_{t=0}^{T-1} V_t + V_T \;=\; \gamma \sum_{t=0}^{T-1} V_t + T\tau + \sum_{t=0}^{T-1}\varepsilon_t.
\end{equation}
Rearranging and using $V_T \ge 0$:
\begin{equation}
  (1-\gamma)\sum_{t=0}^{T-1} V_t \;=\; -V_T + T\tau + \sum_{t=0}^{T-1}\varepsilon_t \;\le\;
  T\tau + \sum_{t=0}^{T-1}\varepsilon_t.
  \label{eq:rearranged-app}
\end{equation}

We construct the bound for the violation sum. Since $\varepsilon_t \le \varepsilon_t^+ := \max(\varepsilon_t, 0)$ and the softplus lower bound \eqref{eq:sp-lb} gives $\log(1 + e^{\varepsilon_t}) \ge \varepsilon_t^+$:
\begin{equation}
  \sum_{t=0}^{T-1}\varepsilon_t\;\le\;\sum_{t=0}^{T-1}\varepsilon_t^+\;\le\;\sum_{t=0}^{T-1 }\log(1+e^{\varepsilon_t}) \;=\; T\,\mathcal{L}_{\mathrm{Lyap}}(\theta).
  \label{eq:violation-bound-app}
\end{equation}

Substituting \eqref{eq:violation-bound-app} into
\eqref{eq:rearranged-app} and dividing by $T(1-\gamma)$:
\begin{equation}
  \bar{V}_T \;=\; \frac{1}{T}\sum_{t=0}^{T-1}V_t \;\le\; \frac{\tau}{1-\gamma} + \frac{\mathcal{L}_{\mathrm{Lyap}}(\theta)}{1-\gamma}.
\end{equation}
Note that the $V_0/T$ transient term present in the general case vanishes here because $V_0 = 0$.
\hfill$\square_{(i)}$

\textbf{Proof of part (ii): Variance control.}

Throughout, the trajectory is a single contiguous segment $\mathcal T=\{0,\ldots,T-1\}$ with $V_t\ge 0$, and we write $x^{+}:=\max(x,0)$, $x^{-}:=\max(-x,0)$, so that $\varepsilon_t=\varepsilon_t^{+}-\varepsilon_t^{-}$ and $\varepsilon_t^2=(\varepsilon_t^{+})^2+(\varepsilon_t^{-})^2$. Recall from Section~\ref{sec:theorem} that $M=\max_t\varepsilon_t^{+}$ is the largest ISS violation and $n=\frac{M^2}{\log(1+e^{M})}$. Let $\bar\varepsilon:=\frac1T\sum_{t=0}^{T-1}\varepsilon_t$ and
\begin{equation}
  P:=\frac1T\sum_{t=0}^{T-1}(\varepsilon_t^{+})^2,
  \qquad
  D:=\frac1T\sum_{t=0}^{T-1}(\varepsilon_t^{-})^2 .
  \label{eq:PD-app}
\end{equation}

Dividing the equality in \eqref{eq:rearranged-app} by $T$ gives 
\begin{equation}
  (1-\gamma)\bar V_T=\tau+\bar\varepsilon-\frac{V_T}{T}.
  \label{eq:mean-identity-app}
\end{equation}
Let $\widetilde V_t:=V_t-\bar V_T$ and $\delta_t:=\varepsilon_t-\bar\varepsilon$.
Subtracting $\bar V_T$ from \eqref{eq:exact-rec-app} and using
\eqref{eq:mean-identity-app} yields $\widetilde V_{t+1}=\gamma\widetilde V_t+\delta_t+V_T/T$. Unrolling from $\widetilde V_0=-\bar V_T$,
\begin{equation}
  \widetilde V_t=S_t+R_t,\qquad
  S_t:=\sum_{s=0}^{t-1}\gamma^{t-1-s}\delta_s,\qquad
  R_t:=-\gamma^{t}\bar V_T+\frac{V_T(1-\gamma^{t})}{T(1-\gamma)}.
  \label{eq:unroll-app}
\end{equation}
By $(a+b)^2\le 2a^2+2b^2$,
\begin{equation}
  \operatorname{Var}_t[V_t]=\frac1T\sum_{t=0}^{T-1}\widetilde V_t^{\,2}
  \le \frac2T\sum_{t=0}^{T-1}S_t^2+\mathcal R_T,
  \qquad
  \mathcal R_T:=\frac2T\sum_{t=0}^{T-1}R_t^2 .
  \label{eq:var-split-app}
\end{equation}

The sequence $S_t$ is the convolution of $(\gamma^k)_{k\ge0}$, whose $\ell^1$-norm is $1/(1-\gamma)$, with $(\delta_s)_{s=0}^{T-1}$. Young's inequality $\|a*b\|_{\ell^2}\le\|a\|_{\ell^1}\|b\|_{\ell^2}$ gives
\begin{equation}
  \frac1T\sum_{t=0}^{T-1}S_t^2
  \le\frac{1}{(1-\gamma)^2}\,\frac1T\sum_{t=0}^{T-1}\delta_t^2
  =\frac{1}{(1-\gamma)^2}\Bigl(\frac1T\sum_{t=0}^{T-1}\varepsilon_t^2-\bar\varepsilon^{2}\Bigr)
  \le\frac{P+D}{(1-\gamma)^2}.
  \label{eq:young-app}
\end{equation}

To bound $P$, let $q(x):=\frac{x^2}{\log(1+e^{x})}$ for $x\ge0$, so that $n=q(M)$. With $\sigma(x):=\frac{1}{1+e^{-x}}$, the derivative of $\log(1+e^{x})$,
\[
  q'(x)=\frac{x\bigl(2\log(1+e^{x})-x\,\sigma(x)\bigr)}{\log^2(1+e^{x})}\ge 0,
\]
since $\log(1+e^{x})\ge x\ge x\,\sigma(x)$ by \eqref{eq:sp-lb}. Hence $q$ is nondecreasing on $[0,\infty)$. Whenever $\varepsilon_t>0$, we have $0<\varepsilon_t\le M$ and therefore $(\varepsilon_t^{+})^2=q(\varepsilon_t)\log(1+e^{\varepsilon_t})
\le q(M)\log(1+e^{\varepsilon_t})=n\log(1+e^{\varepsilon_t})$;
the same inequality is trivial when $\varepsilon_t\le0$. Averaging,
\begin{equation}
  P\le n\,\mathcal L_{\mathrm{Lyap}}(\theta).
  \label{eq:P-bound-app}
\end{equation}

To bound $D$, note that $V_{t+1}\ge0$ implies $\varepsilon_t^{-}\le\gamma V_t+\tau$, so $D\le 2\gamma^2\,\frac1T\sum_{t=0}^{T-1} V_t^2+2\tau^2$. From $V_{t+1}\le\gamma V_t+\tau+\varepsilon_t^{+}$ and $V_0=0$,
\[
  0\le V_t\le\frac{\tau}{1-\gamma}+\sum_{s=0}^{t-1}\gamma^{t-1-s}\varepsilon_s^{+}.
\]
Squaring and applying Young's inequality as in \eqref{eq:young-app} gives
$\frac1T\sum_{t=0}^{T-1} V_t^2\le\frac{2(\tau^2+P)}{(1-\gamma)^2}$. Therefore
\begin{equation}
  D\le 2\tau^2+\frac{4\gamma^2(\tau^2+P)}{(1-\gamma)^2}.
  \label{eq:D-bound-app}
\end{equation}

Combining \eqref{eq:var-split-app}, \eqref{eq:young-app} and \eqref{eq:D-bound-app},
\[
  \operatorname{Var}_t[V_t]
  \le\frac{2P}{(1-\gamma)^2}\Bigl[1+\frac{4\gamma^2}{(1-\gamma)^2}\Bigr]
  +\frac{4\tau^2}{(1-\gamma)^2}\Bigl[1+\frac{2\gamma^2}{(1-\gamma)^2}\Bigr]
  +\mathcal R_T ,
\]
and \eqref{eq:P-bound-app} yields \eqref{eq:var-bound}.

Finally, from \eqref{eq:unroll-app}, using $\sum_{t\ge0}\gamma^{2t}=1/(1-\gamma^2)$ and $(1-\gamma^t)^2\le1$,
\begin{equation}
  \mathcal R_T\le\frac{4\bar V_T^{2}}{T(1-\gamma^2)}+\frac{4V_T^{2}}{T^2(1-\gamma)^2}.
  \label{eq:RT-bound-app}
\end{equation}
If $M\le M_0$ for all $T$, then unrolling $V_{t+1}\le\gamma V_t+\tau+M_0$ gives $V_t\le(\tau+M_0)/(1-\gamma)$ for all $t$. Hence $\bar V_T$ and $V_T$ are bounded uniformly in $T$, and \eqref{eq:RT-bound-app} gives $\mathcal R_T=O(1/T)$.
\hfill$\square_{(ii)}$

\textbf{Proof of part (iii): Practical tube width.}

By Jensen's inequality applied to the concave function $\sqrt{\cdot}$:
\begin{equation}
  \mathbb{E}_t\!\left[\|e_t\|_2\right]
  \;=\;
  \mathbb{E}_t\!\left[\sqrt{V_t}\,\|v_{\mathrm{geo}}\|_2\right]
  \;\le\;
  \sqrt{\mathbb{E}_t[V_t]}\cdot\|v_{\mathrm{geo}}\|_2
  \;=\;
  \sqrt{\bar{V}_T}\cdot\|v_{\mathrm{geo}}\|_2.
\end{equation}
Substituting \eqref{eq:mean-bound} (with $V_0 = 0$):
\begin{equation}
  \mathbb{E}_t\!\left[\|e_t\|_2\right]
  \;\le\;
  \|v_{\mathrm{geo}}\|_2\cdot
  \sqrt{\frac{\tau + \mathcal{L}_{\mathrm{Lyap}}(\theta)}{1-\gamma}}.
\end{equation}
\hfill$\square_{(iii)}$

\end{proof}


\section{Additional Experiment Results}
\label{app:exp-details}
\subsection{Implicit Tube Preservation at Inference Time}
To further investigate the intrinsic geometric structure learned by the models, we visualize the trajectory dynamics during autoregressive generation (Figure \ref{fig:e_t_vs_v_t}). Crucially, during inference, models operate in an open-loop regime where all regularization losses are entirely deactivated. The hidden states are propelled solely by the vector field learned by the network weights.

\textbf{Baseline Drift:} The standard fine-tuned model (Regular, orange) exhibits substantial transversal drift across all samples, confirming the susceptibility of unconstrained models to inference-time Brownian motion and mode collapse.

\textbf{STP Volatility:} While Semantic Tube Prediction (STP, blue) generally reduces the mean error compared to the baseline, its generation trajectories are highly erratic. This empirical volatility is a direct consequence of its heuristic formulation: by solely penalizing local angular deviations without imposing global magnitude bounds, STP fails to construct a monotone structural boundary, allowing the transversal error to oscillate wildly.

\textbf{ControlJEPA Stability:} In stark contrast, ControlJEPA (green) consistently traces the smoothest trajectories with the lowest mean transversal errors. Despite the absence of an active Lyapunov penalty during decoding, the hidden states remain tightly confined within the semantic tube. This provides compelling evidence that our Input-to-State Stability (ISS) optimization does not merely memorize training paths; rather, it fundamentally reshapes the representation manifold by embedding a global attractor that implicitly corrects transversal drift step-by-step at inference time.
\begin{figure}[t]
    \centering
    \includegraphics[width=0.9\columnwidth]{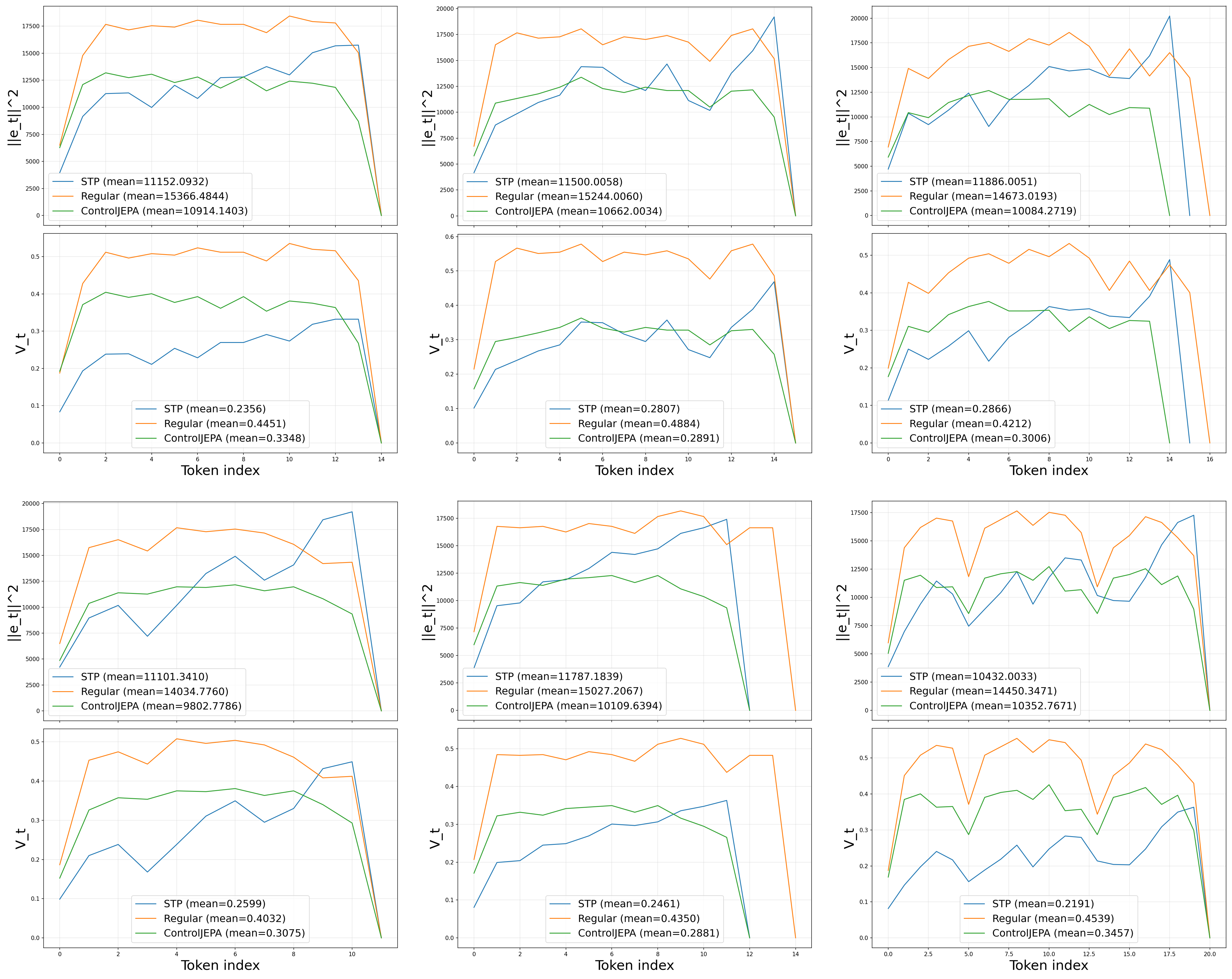}
    \caption{Step-by-step evolution of the unnormalized transversal error $\|e_t\|_2^2$ (top panels) and the dimensionless Lyapunov candidate $V_t$ (bottom panels) across multiple generated sequences. Operating in an open-loop regime without any active loss constraints, ControlJEPA (green) uniquely maintains a tight, stable semantic tube. In contrast, regular fine-tuning (orange) exhibits severe geometric drift, and STP (blue) suffers from high trajectory volatility.}
    \label{fig:e_t_vs_v_t}
    \vspace{-2em}
\end{figure}
\subsection{Hyperparameter Sensitivity: Contraction Rate and Orthogonal Slack}
To comprehensively evaluate the robustness and behavior of the decoupled Lyapunov hyperparameters, we perform a grid search over the contraction rate $\gamma$ and the orthogonal slack $\tau$. As illustrated in Figure \ref{fig:gamma_tau}, ControlJEPA exhibits a wide, flat Pareto-optimal stability region across most combinations, freeing practitioners from brittle hyperparameter tuning.
Crucially, an analysis of the peak performance regions reveals a profound connection between the optimal tube geometry and the intrinsic semantic diversity of the downstream task:

\textbf{Rigid Geodesics for Rule-Based Tasks:} For tasks governed by strict syntactic rules or deterministic logic with low semantic variance (e.g., SYNTH), accuracy peaks at a tighter configuration ($\gamma \in [0.84, 0.9]$ combined with smaller $\tau$). Because the ground-truth geodesic for these tasks is highly rigid, a faster contraction rate aggressively squashes transversal noise and forces strict adherence to the target trajectory, yielding maximal accuracy.

\textbf{Relaxed Envelopes for Semantic Diversity:} Conversely, for open-domain tasks characterized by high semantic diversity, complex reasoning, and linguistic nuance (e.g., NQ-Open, HellaSwag), the optimal configuration systematically shifts toward a more relaxed boundary ($\gamma \in [0.9, 0.95]$ coupled with larger $\tau$). High-diversity tasks naturally possess meandering semantic trajectories. Enforcing an overly aggressive contraction would over-regularize the representations, leading to the dimensionality collapse discussed in Section 6.1. A larger $\gamma$ (slower contraction) and a larger $\tau$ (greater slack) provide the necessary geometric volume to accommodate these rich manifold variations while still preventing catastrophic trajectory drift.

\begin{figure}[t]
    \centering
    \includegraphics[width=0.85\columnwidth]{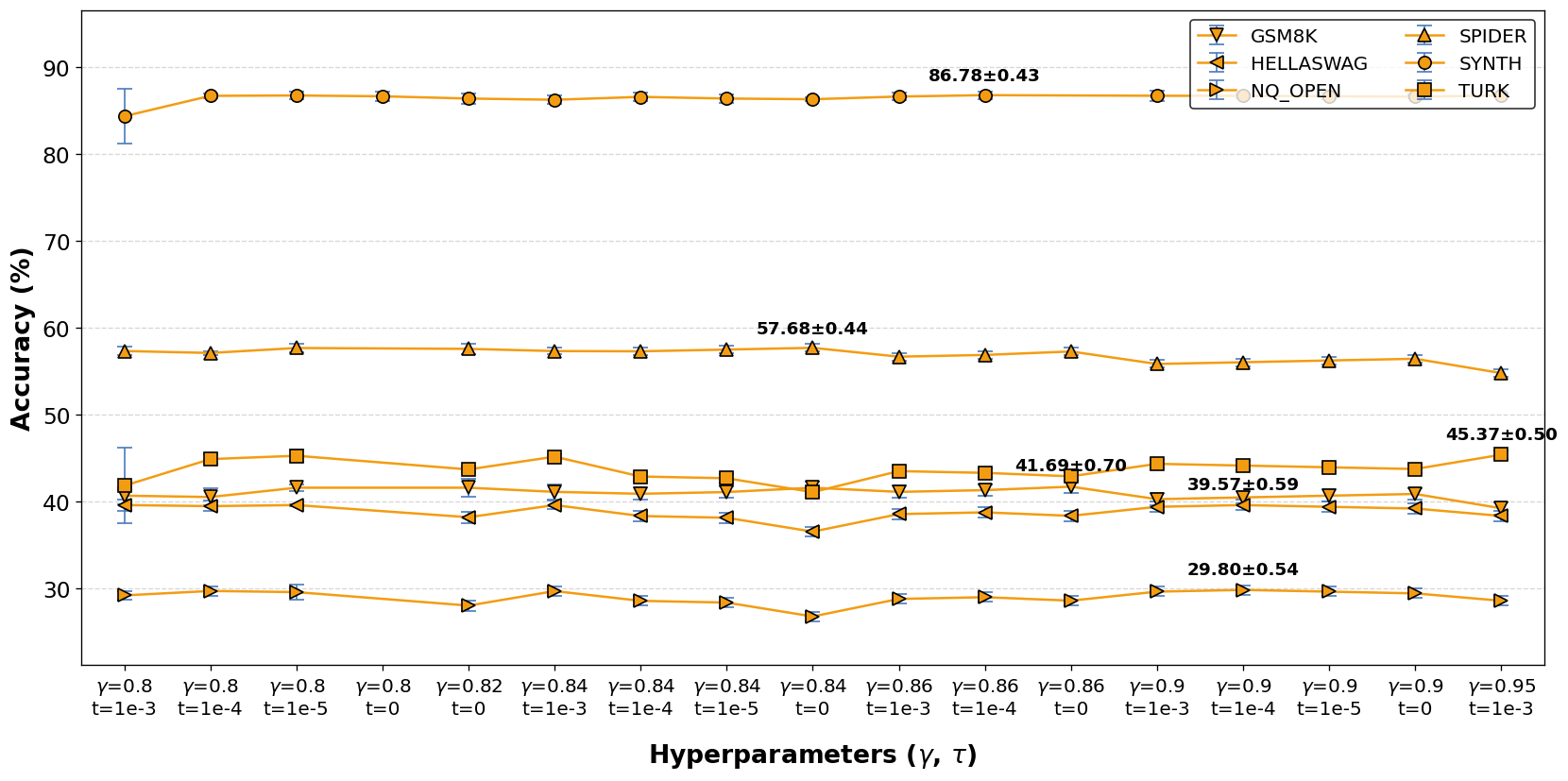}
    \caption{The accuracy landscape across diverse datasets demonstrates a broad stability region. Tasks with rigid, rule-based structures (e.g., SYNTH) peak at tighter tube configurations ($\gamma \in [0.84, 0.9]$), while semantically diverse open-domain tasks (e.g., NQ-Open) require a more relaxed envelope ($\gamma \ge 0.9$ with larger $\tau$) to accommodate complex manifold variations.}
    \label{fig:gamma_tau}
    \vspace{-1em}
\end{figure}

\subsection{Hardware Settings}
\label{hardware}
We conducted all experiments on two NVIDIA H100 80GB GPUs.
\section{Broader Societal Impacts}
\label{app:broader-impact}
This work is primarily methodological and theoretical: it introduces ControlJEPA, a trajectory regularization framework based on discrete-time Input-to-State Stability for improving the geometry, stability, and data efficiency of hidden-state trajectories in language models. Its societal impact is therefore indirect. On the positive side, by providing explicit and interpretable control over trajectory deviation, ControlJEPA may help researchers design more reliable and sample-efficient models, reduce trial-and-error fine-tuning, and better diagnose representation-level failure modes such as trajectory drift, collapse, or instability. However, improved language-model performance and data efficiency may also amplify broader risks associated with capable generative systems, including misinformation, misuse, overreliance, and deployment in high-stakes settings without adequate safeguards. Our method does not directly address alignment, fairness, privacy, factuality, robustness to adversarial inputs, or harmful-content prevention; the theoretical guarantees apply to the controlled hidden-state geometry rather than to safe or truthful model behavior. Thus, ControlJEPA should be viewed as a foundation for improving representation stability, not as a standalone safety guarantee for deployed language models.


\newpage
\newpage
\section*{NeurIPS Paper Checklist}

\begin{enumerate}

\item {\bf Claims}
    \item[] Question: Do the main claims made in the abstract and introduction accurately reflect the paper's contributions and scope?
    \item[] Answer: \answerYes{} 
    \item[] Justification: The abstract and introduction clearly state the paper’s three main contributions: the ISS-based trajectory regularization framework, the Lyapunov Tube Loss, and the theoretical/empirical analysis of stability and representational capacity. These claims are supported by theoretical derivations in Sections~\ref{sec:method}--\ref{sec:theoretical_analysis} and empirical validation in Sections~\ref{sec:experiments}--~\ref{sec:analysis}.
    \item[] Guidelines:
    \begin{itemize}
        \item The answer \answerNA{} means that the abstract and introduction do not include the claims made in the paper.
        \item The abstract and/or introduction should clearly state the claims made, including the contributions made in the paper and important assumptions and limitations. A \answerNo{} or \answerNA{} answer to this question will not be perceived well by the reviewers. 
        \item The claims made should match theoretical and experimental results, and reflect how much the results can be expected to generalize to other settings. 
        \item It is fine to include aspirational goals as motivation as long as it is clear that these goals are not attained by the paper. 
    \end{itemize}

\item {\bf Limitations}
    \item[] Question: Does the paper discuss the limitations of the work performed by the authors?
    \item[] Answer: \answerYes{} 
    \item[] Justification: The paper explicitly discusses limitations in the “Limitations” paragraph of Section~\ref{sec:conclusion}, including lack of guarantees for out-of-distribution trajectories, inference-time stability remaining empirical, scalability only tested up to 8B models, and dependence on dataset-provided semantic anchors.
    \item[] Guidelines:
    \begin{itemize}
        \item The answer \answerNA{} means that the paper has no limitation while the answer \answerNo{} means that the paper has limitations, but those are not discussed in the paper. 
        \item The authors are encouraged to create a separate ``Limitations'' section in their paper.
        \item The paper should point out any strong assumptions and how robust the results are to violations of these assumptions (e.g., independence assumptions, noiseless settings, model well-specification, asymptotic approximations only holding locally). The authors should reflect on how these assumptions might be violated in practice and what the implications would be.
        \item The authors should reflect on the scope of the claims made, e.g., if the approach was only tested on a few datasets or with a few runs. In general, empirical results often depend on implicit assumptions, which should be articulated.
        \item The authors should reflect on the factors that influence the performance of the approach. For example, a facial recognition algorithm may perform poorly when image resolution is low or images are taken in low lighting. Or a speech-to-text system might not be used reliably to provide closed captions for online lectures because it fails to handle technical jargon.
        \item The authors should discuss the computational efficiency of the proposed algorithms and how they scale with dataset size.
        \item If applicable, the authors should discuss possible limitations of their approach to address problems of privacy and fairness.
        \item While the authors might fear that complete honesty about limitations might be used by reviewers as grounds for rejection, a worse outcome might be that reviewers discover limitations that aren't acknowledged in the paper. The authors should use their best judgment and recognize that individual actions in favor of transparency play an important role in developing norms that preserve the integrity of the community. Reviewers will be specifically instructed to not penalize honesty concerning limitations.
    \end{itemize}

\item {\bf Theory assumptions and proofs}
    \item[] Question: For each theoretical result, does the paper provide the full set of assumptions and a complete (and correct) proof?
    \item[] Answer: \answerYes{} 
    \item[] Justification: All theoretical statements are formally presented with numbered definitions, propositions, and theorems. Assumptions are explicitly stated (e.g., Assumption 1 in Appendix A.2), and complete proofs are provided in Appendix~\ref{app:transversal}. 
    \item[] Guidelines:
    \begin{itemize}
        \item The answer \answerNA{} means that the paper does not include theoretical results. 
        \item All the theorems, formulas, and proofs in the paper should be numbered and cross-referenced.
        \item All assumptions should be clearly stated or referenced in the statement of any theorems.
        \item The proofs can either appear in the main paper or the supplemental material, but if they appear in the supplemental material, the authors are encouraged to provide a short proof sketch to provide intuition. 
        \item Inversely, any informal proof provided in the core of the paper should be complemented by formal proofs provided in appendix or supplemental material.
        \item Theorems and Lemmas that the proof relies upon should be properly referenced. 
    \end{itemize}

    \item {\bf Experimental result reproducibility}
    \item[] Question: Does the paper fully disclose all the information needed to reproduce the main experimental results of the paper to the extent that it affects the main claims and/or conclusions of the paper (regardless of whether the code and data are provided or not)?
    \item[] Answer: \answerYes{} 
    \item[] Justification: The paper specifies datasets, model families, baseline methods, training objectives, and hyperparameters (\(\lambda, \gamma, \tau\)) in Section~\ref{setup}. Additional implementation and proof details are included in the appendix and supplementary material.
    \item[] Guidelines:
    \begin{itemize}
        \item The answer \answerNA{} means that the paper does not include experiments.
        \item If the paper includes experiments, a \answerNo{} answer to this question will not be perceived well by the reviewers: Making the paper reproducible is important, regardless of whether the code and data are provided or not.
        \item If the contribution is a dataset and\slash or model, the authors should describe the steps taken to make their results reproducible or verifiable. 
        \item Depending on the contribution, reproducibility can be accomplished in various ways. For example, if the contribution is a novel architecture, describing the architecture fully might suffice, or if the contribution is a specific model and empirical evaluation, it may be necessary to either make it possible for others to replicate the model with the same dataset, or provide access to the model. In general. releasing code and data is often one good way to accomplish this, but reproducibility can also be provided via detailed instructions for how to replicate the results, access to a hosted model (e.g., in the case of a large language model), releasing of a model checkpoint, or other means that are appropriate to the research performed.
        \item While NeurIPS does not require releasing code, the conference does require all submissions to provide some reasonable avenue for reproducibility, which may depend on the nature of the contribution. For example
        \begin{enumerate}
            \item If the contribution is primarily a new algorithm, the paper should make it clear how to reproduce that algorithm.
            \item If the contribution is primarily a new model architecture, the paper should describe the architecture clearly and fully.
            \item If the contribution is a new model (e.g., a large language model), then there should either be a way to access this model for reproducing the results or a way to reproduce the model (e.g., with an open-source dataset or instructions for how to construct the dataset).
            \item We recognize that reproducibility may be tricky in some cases, in which case authors are welcome to describe the particular way they provide for reproducibility. In the case of closed-source models, it may be that access to the model is limited in some way (e.g., to registered users), but it should be possible for other researchers to have some path to reproducing or verifying the results.
        \end{enumerate}
    \end{itemize}

\item {\bf Open access to data and code}
    \item[] Question: Does the paper provide open access to the data and code, with sufficient instructions to faithfully reproduce the main experimental results, as described in supplemental material?
    \item[] Answer: \answerYes{} 
    \item[] Justification: We provide anonymized code, datasets, training scripts, and reproduction instructions in the Supplementary Material. All experimental assets required to reproduce the main results are included in the released repository.
    \item[] Guidelines:
    \begin{itemize}
        \item The answer \answerNA{} means that paper does not include experiments requiring code.
        \item Please see the NeurIPS code and data submission guidelines (\url{https://neurips.cc/public/guides/CodeSubmissionPolicy}) for more details.
        \item While we encourage the release of code and data, we understand that this might not be possible, so \answerNo{} is an acceptable answer. Papers cannot be rejected simply for not including code, unless this is central to the contribution (e.g., for a new open-source benchmark).
        \item The instructions should contain the exact command and environment needed to run to reproduce the results. See the NeurIPS code and data submission guidelines (\url{https://neurips.cc/public/guides/CodeSubmissionPolicy}) for more details.
        \item The authors should provide instructions on data access and preparation, including how to access the raw data, preprocessed data, intermediate data, and generated data, etc.
        \item The authors should provide scripts to reproduce all experimental results for the new proposed method and baselines. If only a subset of experiments are reproducible, they should state which ones are omitted from the script and why.
        \item At submission time, to preserve anonymity, the authors should release anonymized versions (if applicable).
        \item Providing as much information as possible in supplemental material (appended to the paper) is recommended, but including URLs to data and code is permitted.
    \end{itemize}

\item {\bf Experimental setting/details}
    \item[] Question: Does the paper specify all the training and test details (e.g., data splits, hyperparameters, how they were chosen, type of optimizer) necessary to understand the results?
    \item[] Answer: \answerYes{} 
    \item[] Justification: The paper details datasets, model scales, baselines, and tuned hyperparameters in Section~\ref{setup}. It also explains the optimization behavior and hyperparameter sensitivity in Section~\ref{hyper}.

    \item[] Guidelines:
    \begin{itemize}
        \item The answer \answerNA{} means that the paper does not include experiments.
        \item The experimental setting should be presented in the core of the paper to a level of detail that is necessary to appreciate the results and make sense of them.
        \item The full details can be provided either with the code, in appendix, or as supplemental material.
    \end{itemize}

\item {\bf Experiment statistical significance}
    \item[] Question: Does the paper report error bars suitably and correctly defined or other appropriate information about the statistical significance of the experiments?
    \item[] Answer: \answerYes{} 
    \item[] Justification: The paper reports error bars for the main experimental results in Figures~\ref{fig:acc}--~\ref{fig:infer_time}. The reported error bars reflect variability across experimental runs and are described alongside the corresponding figures in Sections~\ref{sec:experiments} and ~\ref{sec:analysis}
    \item[] Guidelines:
    \begin{itemize}
        \item The answer \answerNA{} means that the paper does not include experiments.
        \item The authors should answer \answerYes{} if the results are accompanied by error bars, confidence intervals, or statistical significance tests, at least for the experiments that support the main claims of the paper.
        \item The factors of variability that the error bars are capturing should be clearly stated (for example, train/test split, initialization, random drawing of some parameter, or overall run with given experimental conditions).
        \item The method for calculating the error bars should be explained (closed form formula, call to a library function, bootstrap, etc.)
        \item The assumptions made should be given (e.g., Normally distributed errors).
        \item It should be clear whether the error bar is the standard deviation or the standard error of the mean.
        \item It is OK to report 1-sigma error bars, but one should state it. The authors should preferably report a 2-sigma error bar than state that they have a 96\% CI, if the hypothesis of Normality of errors is not verified.
        \item For asymmetric distributions, the authors should be careful not to show in tables or figures symmetric error bars that would yield results that are out of range (e.g., negative error rates).
        \item If error bars are reported in tables or plots, the authors should explain in the text how they were calculated and reference the corresponding figures or tables in the text.
    \end{itemize}

\item {\bf Experiments compute resources}
    \item[] Question: For each experiment, does the paper provide sufficient information on the computer resources (type of compute workers, memory, time of execution) needed to reproduce the experiments?
    \item[] Answer: \answerYes{} 
    \item[] Justification: We have provided information about computing resources needed in Appendix~\ref{hardware}
    \item[] Guidelines:
    \begin{itemize}
        \item The answer \answerNA{} means that the paper does not include experiments.
        \item The paper should indicate the type of compute workers CPU or GPU, internal cluster, or cloud provider, including relevant memory and storage.
        \item The paper should provide the amount of compute required for each of the individual experimental runs as well as estimate the total compute. 
        \item The paper should disclose whether the full research project required more compute than the experiments reported in the paper (e.g., preliminary or failed experiments that didn't make it into the paper). 
    \end{itemize}
    
\item {\bf Code of ethics}
    \item[] Question: Does the research conducted in the paper conform, in every respect, with the NeurIPS Code of Ethics \url{https://neurips.cc/public/EthicsGuidelines}?
    \item[] Answer: \answerYes{} 
    \item[] Justification: The research involves theoretical analysis and fine-tuning of publicly available language models and datasets, with no human subject experiments or unsafe deployment claims. The work appears consistent with the NeurIPS Code of Ethics.
    \item[] Guidelines:
    \begin{itemize}
        \item The answer \answerNA{} means that the authors have not reviewed the NeurIPS Code of Ethics.
        \item If the authors answer \answerNo, they should explain the special circumstances that require a deviation from the Code of Ethics.
        \item The authors should make sure to preserve anonymity (e.g., if there is a special consideration due to laws or regulations in their jurisdiction).
    \end{itemize}

\item {\bf Broader impacts}
    \item[] Question: Does the paper discuss both potential positive societal impacts and negative societal impacts of the work performed?
    \item[] Answer: \answerYes{} 
    \item[] Justification: The paper discusses both positive and negative societal impacts in Appendix~\ref{app:broader-impact}. It highlights improved reliability and data efficiency for language models, while acknowledging broader risks such as misuse, misinformation, and unsafe deployment of capable generative systems.
    \item[] Guidelines:
    \begin{itemize}
        \item The answer \answerNA{} means that there is no societal impact of the work performed.
        \item If the authors answer \answerNA{} or \answerNo, they should explain why their work has no societal impact or why the paper does not address societal impact.
        \item Examples of negative societal impacts include potential malicious or unintended uses (e.g., disinformation, generating fake profiles, surveillance), fairness considerations (e.g., deployment of technologies that could make decisions that unfairly impact specific groups), privacy considerations, and security considerations.
        \item The conference expects that many papers will be foundational research and not tied to particular applications, let alone deployments. However, if there is a direct path to any negative applications, the authors should point it out. For example, it is legitimate to point out that an improvement in the quality of generative models could be used to generate Deepfakes for disinformation. On the other hand, it is not needed to point out that a generic algorithm for optimizing neural networks could enable people to train models that generate Deepfakes faster.
        \item The authors should consider possible harms that could arise when the technology is being used as intended and functioning correctly, harms that could arise when the technology is being used as intended but gives incorrect results, and harms following from (intentional or unintentional) misuse of the technology.
        \item If there are negative societal impacts, the authors could also discuss possible mitigation strategies (e.g., gated release of models, providing defenses in addition to attacks, mechanisms for monitoring misuse, mechanisms to monitor how a system learns from feedback over time, improving the efficiency and accessibility of ML).
    \end{itemize}
    
\item {\bf Safeguards}
    \item[] Question: Does the paper describe safeguards that have been put in place for responsible release of data or models that have a high risk for misuse (e.g., pre-trained language models, image generators, or scraped datasets)?
    \item[] Answer: \answerNA{} 
    \item[] Justification: The paper proposes a training-time regularization loss and does not release new models, datasets, or tools that pose risks for misuse.
    \item[] Guidelines:
    \begin{itemize}
        \item The answer \answerNA{} means that the paper poses no such risks.
        \item Released models that have a high risk for misuse or dual-use should be released with necessary safeguards to allow for controlled use of the model, for example by requiring that users adhere to usage guidelines or restrictions to access the model or implementing safety filters. 
        \item Datasets that have been scraped from the Internet could pose safety risks. The authors should describe how they avoided releasing unsafe images.
        \item We recognize that providing effective safeguards is challenging, and many papers do not require this, but we encourage authors to take this into account and make a best faith effort.
    \end{itemize}

\item {\bf Licenses for existing assets}
    \item[] Question: Are the creators or original owners of assets (e.g., code, data, models), used in the paper, properly credited and are the license and terms of use explicitly mentioned and properly respected?
    \item[] Answer: \answerYes{} 
    \item[] Justification: All models, benchmarks, and baseline methods are properly cited with their original references throughout the paper and in the bibliography.
    \item[] Guidelines:
    \begin{itemize}
        \item The answer \answerNA{} means that the paper does not use existing assets.
        \item The authors should cite the original paper that produced the code package or dataset.
        \item The authors should state which version of the asset is used and, if possible, include a URL.
        \item The name of the license (e.g., CC-BY 4.0) should be included for each asset.
        \item For scraped data from a particular source (e.g., website), the copyright and terms of service of that source should be provided.
        \item If assets are released, the license, copyright information, and terms of use in the package should be provided. For popular datasets, \url{paperswithcode.com/datasets} has curated licenses for some datasets. Their licensing guide can help determine the license of a dataset.
        \item For existing datasets that are re-packaged, both the original license and the license of the derived asset (if it has changed) should be provided.
        \item If this information is not available online, the authors are encouraged to reach out to the asset's creators.
    \end{itemize}

\item {\bf New assets}
    \item[] Question: Are new assets introduced in the paper well documented and is the documentation provided alongside the assets?
    \item[] Answer: \answerYes{} 
    \item[] Justification: The paper releases an implementation of ControlJEPA together with documentation, configuration files, and reproduction instructions in the supplementary material.
    \item[] Guidelines:
    \begin{itemize}
        \item The answer \answerNA{} means that the paper does not release new assets.
        \item Researchers should communicate the details of the dataset\slash code\slash model as part of their submissions via structured templates. This includes details about training, license, limitations, etc. 
        \item The paper should discuss whether and how consent was obtained from people whose asset is used.
        \item At submission time, remember to anonymize your assets (if applicable). You can either create an anonymized URL or include an anonymized zip file.
    \end{itemize}

\item {\bf Crowdsourcing and research with human subjects}
    \item[] Question: For crowdsourcing experiments and research with human subjects, does the paper include the full text of instructions given to participants and screenshots, if applicable, as well as details about compensation (if any)? 
    \item[] Answer: \answerNA{} 
    \item[] Justification: The paper does not involve crowdsourcing nor research with human subjects.
    \item[] Guidelines:
    \begin{itemize}
        \item The answer \answerNA{} means that the paper does not involve crowdsourcing nor research with human subjects.
        \item Including this information in the supplemental material is fine, but if the main contribution of the paper involves human subjects, then as much detail as possible should be included in the main paper. 
        \item According to the NeurIPS Code of Ethics, workers involved in data collection, curation, or other labor should be paid at least the minimum wage in the country of the data collector. 
    \end{itemize}

\item {\bf Institutional review board (IRB) approvals or equivalent for research with human subjects}
    \item[] Question: Does the paper describe potential risks incurred by study participants, whether such risks were disclosed to the subjects, and whether Institutional Review Board (IRB) approvals (or an equivalent approval/review based on the requirements of your country or institution) were obtained?
    \item[] Answer: \answerNA{} 
    \item[] Justification: The paper does not involve crowdsourcing or research with human subjects, so IRB approval is not applicable.
    \item[] Guidelines:
    \begin{itemize}
        \item The answer \answerNA{} means that the paper does not involve crowdsourcing nor research with human subjects.
        \item Depending on the country in which research is conducted, IRB approval (or equivalent) may be required for any human subjects research. If you obtained IRB approval, you should clearly state this in the paper. 
        \item We recognize that the procedures for this may vary significantly between institutions and locations, and we expect authors to adhere to the NeurIPS Code of Ethics and the guidelines for their institution. 
        \item For initial submissions, do not include any information that would break anonymity (if applicable), such as the institution conducting the review.
    \end{itemize}

\item {\bf Declaration of LLM usage}
    \item[] Question: Does the paper describe the usage of LLMs if it is an important, original, or non-standard component of the core methods in this research? Note that if the LLM is used only for writing, editing, or formatting purposes and does \emph{not} impact the core methodology, scientific rigor, or originality of the research, a declaration is not required.
    \item[] Answer: \answerNA{} 
    \item[] Justification: LLMs were used only as a writing assistant for editing and polishing the manuscript (grammar, clarity, concision).
    \item[] Guidelines:
    \begin{itemize}
        \item The answer \answerNA{} means that the core method development in this research does not involve LLMs as any important, original, or non-standard components.
        \item Please refer to our LLM policy in the NeurIPS handbook for what should or should not be described.
    \end{itemize}

\end{enumerate}

\end{document}